\documentclass{article}

\usepackage[main, preprint]{neurips_2026}

\usepackage[utf8]{inputenc} %
\usepackage[T1]{fontenc}    %
\usepackage{hyperref}       %
\hypersetup{
    colorlinks,
    linkcolor={red!50!black},
    citecolor={blue!50!black},
    urlcolor={blue!80!black}
}
\usepackage{url}            %
\usepackage{booktabs}       %
\usepackage{amsfonts}       %
\usepackage{amsthm}
\usepackage{nicefrac}       %
\usepackage{microtype}      %
\usepackage{xcolor}         %

\usepackage{amsmath} %
\usepackage{mathtools}
\usepackage{cprotect}
\mathtoolsset{showonlyrefs}
\usepackage{amssymb}
\usepackage{bm}
\usepackage{subcaption}
\usepackage{wrapfig}
\usepackage{comment}
\usepackage[dvipsnames]{xcolor}

\newtheorem{theorem}{Theorem}[section]
\newtheorem{proposition}[theorem]{Proposition}

\newtheorem{definition}[theorem]{Definition}

\usepackage{algorithm}
\usepackage[noend]{algpseudocode}

\newcommand{\pdata}{p_{\text{data}}}
\newcommand{\FID}{\text{FID}}
\DeclareMathOperator{\MIND}{\texttt{MIND}}
\newcommand{\E}{\mathbb{E}}
\newcommand{\fP}{\mathbf{P}}
\DeclareMathOperator{\tr}{tr}

\usepackage{enumitem}   
\usepackage{wrapfig}
\usepackage{float}
\usepackage{url}
\usepackage{upgreek}
\usepackage{tikz-cd}

\newcommand{\R}{\mathbb R}

\newcommand{\cN}{\mathcal N}

\newcommand{\cU}{\mathcal{U}}

\newcommand{\cP}{\mathcal{P}}

\renewcommand{\top}{\intercal}

\usepackage{nicefrac}

\title{\texttt{MIND}: Monge Inception Distance\\ for Generative Models Evaluation}

\author{%
  ~~Quentin Berthet$^1$ ~~~~~~Yu-Han Wu$^{1,2}$ ~~~~~~Cl\'ement Crepy$^1$\\[0.5em]
   ~~~~~~~\textbf{Romuald Elie}$^1$ ~~~~~~~~~~\textbf{Klaus Greff}$^1$ ~~~~~~~\textbf{Micha\"el E. Sander}$^1$ \\[1.0em]
  $^1$Google DeepMind\quad$^2$ LPSM, Sorbonne Universit\'e
}

\begin{document}

\maketitle

\begin{abstract}
  We propose the Monge Inception Distance (\texttt{MIND}), a metric for evaluating generative models that addresses key limitations of the widely adopted Fréchet Inception Distance (FID). The \texttt{MIND} metric leverages the sliced Wasserstein distance to compare distributions by averaging one-dimensional optimal transport distances, efficiently computed via sorting. This approach circumvents the estimation of high-dimensional means and covariance matrices, which underlie FID's poor sample complexity and vulnerability to adversarial attacks. We empirically demonstrate three primary advantages: (i) it is more sample-efficient by one order of magnitude, (ii) it is faster to compute by two orders of magnitude, (iii) it is more robust to adversarial attacks such as moment-matching. We show that \texttt{MIND} with 5k samples can replace the evaluation performance of FID with 50k samples, providing high correlation with this standard benchmark and superior discriminative performance. We further demonstrate that even smaller sample sizes (e.g., 1k or 2k) remain highly informative for rapid model iteration.
\end{abstract}

\begin{figure}[ht]
\centering

\includegraphics[width=.9\linewidth]{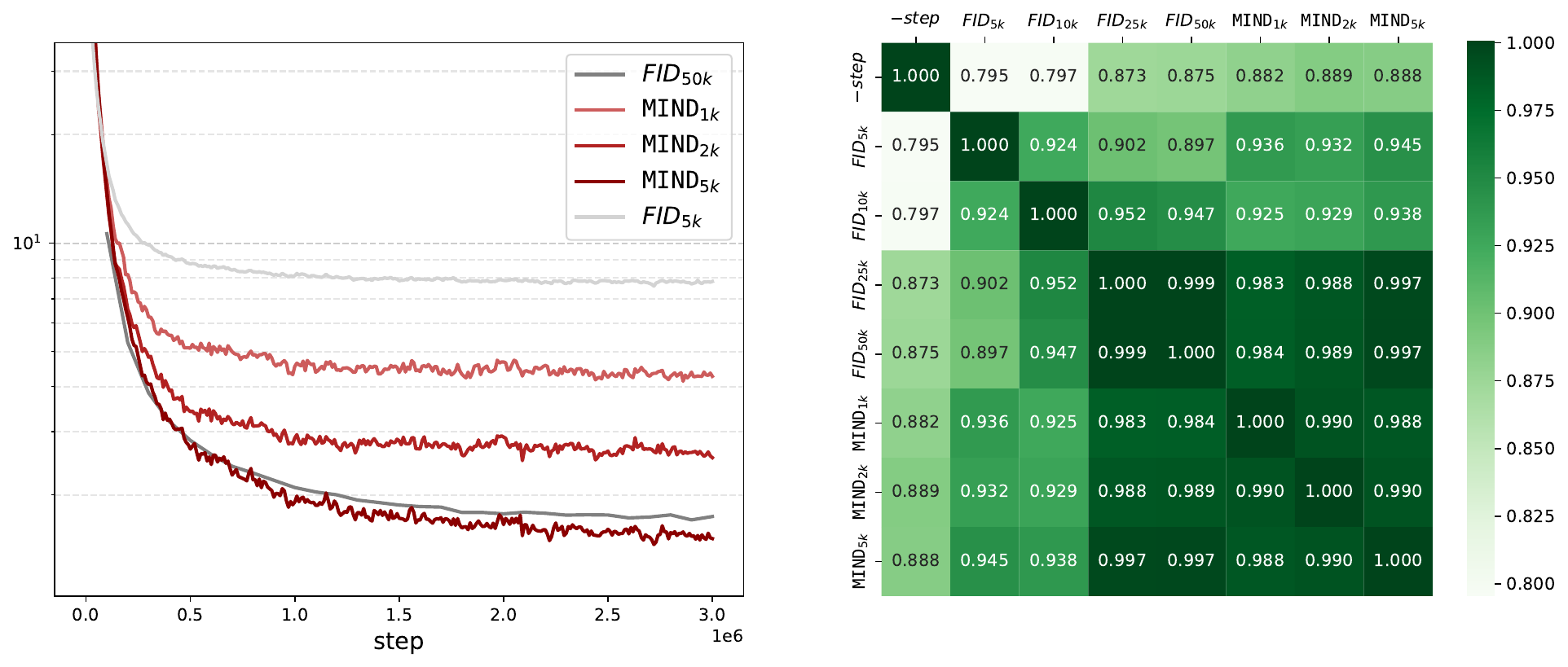}
\caption{(\textbf{Left}) \texttt{MIND} metric during a diffusion model training run on ImageNet-64 (log scale), illustrating how $\MIND_{5k}$ can be used to replace $\FID_{50k}$, with a larger range - see Section~\ref{sec:exp-analysis}. (\textbf{Right}) Correlation with number of training steps - better for $\MIND_{1k}$ and $\MIND_{5k}$ than FID with $50$k samples.
\label{fig:intro}}
\end{figure}

\section{Introduction}

Generative models, especially diffusion models~\citep{diffusion-ddpm}, have set new standards in high-quality data synthesis. This progress has spurred innovation across numerous fields, from creative arts to scientific simulation. However, as models grow in complexity, the metrics used to evaluate them have struggled to keep pace~\citep{stein2023exposing}. The de facto standard, the Fréchet Inception Distance (FID) \citep{heusel2017gans} is based on a Gaussian approximation of pre-trained network embeddings, such as the Inception-v3 model \citep{inception-score}.

This metric relies on estimating high-dimensional mean and covariance matrices from inception embeddings. However, this sample-heavy approach typically requires $50$k samples~\citep{chong2020effectively}, creating a significant development bottleneck. Furthermore, because FID only considers the first two moments of the distributions, it is not a true distance metric and is vulnerable to adversarial "hacking" without corresponding visual improvements \citep{fid-critique-pr}.

In this work, we introduce the Monge Inception Distance ($\MIND$) addressing these limitations. Based on optimal transport theory and named in honor of Gaspard Monge, who introduced the optimal transport problem \citep{monge-1781}, $\MIND$ leverages the sliced Wasserstein distance \citep{rabin2011wasserstein}. Instead of relying on Gaussian simplification as in FID, $\MIND$ reduces the complexity of high-dimensional optimal transport comparison by averaging many one-dimensional projections, where the transport problem is solved exactly via a simple, parallelizable sorting operation. This  captures finer distributional details without the statistical instability inherent in high-dimensional matrix estimation - see \citep{villani2008optimal, peyre2019computational} for a modern perspective on optimal transport theory and applications. 

We highlight that this approach yields stable, high-quality evaluation using only $5$k samples -- an order of magnitude fewer than FID -- while offering $100\times$ faster computation and increased robustness to adversarial moment-matching. We statistically validate the performance of $\MIND$ across various hypothesis testing problems at different sample sizes. Finally, while we present our results using Inception-v3 embeddings to facilitate a direct comparison with the current FID benchmark, the $\MIND$ metric is fundamentally embedding-agnostic. It is modality-independent and can be seamlessly applied to any representation space, including CLIP \citep{radford2021learning}, DINO~\citep{oquab2024dinov2,simeoni2025dinov3}, or specialized embeddings for audio and video synthesis.

\textbf{Main contributions.} In this work, we introduce $\MIND$, a metric for improved evaluation of generative models. We demonstrate the following advantages of this metric:
\begin{itemize}[topsep=0pt,itemsep=2pt,parsep=2pt,leftmargin=10pt]
\item[-] \textbf{Sample Efficiency}: We show that $\MIND_{5k}$ provides a stable evaluation that correlates highly with $\FID_{50k}$, enabling reliable benchmarking with $10\times$ fewer samples.

\item[-] \textbf{Computational Speed and Memory Efficiency}: Due to its reliance on 1D sorting rather than high-dimensional matrix operations, $\MIND$ is over $100\times$ faster to compute, and requires $10\times$ less memory, facilitating real-time evaluation during training.

\item[-] \textbf{Metric Robustness}: Since $\MIND$ is derived from a proper distance, we show that it is significantly more resistant to "metric hacking" via moment-matching attacks that can artificially lower FID.

\item[-] \textbf{Discriminative Power}: Our experiments show that $\MIND$ more reliably distinguishes between model checkpoints and identifies subtle image perturbations at low sample sizes.
\end{itemize}

\section{Generative model evaluations}

We consider the problem of evaluating a generative model $g_\theta$, that generates outputs by mapping noise $Z \sim \cN(0, I)$ to data outputs (e.g. images) $a = g_\theta(Z)$. In standard practice, these outputs are passed through a pre-trained feature extraction model $\psi_{w}$ to obtain embeddings. For an Inception-v3 model \citep{szegedy2016rethinking}, these embeddings typically reside in dimension $d=2,048$.

The performance of the model is measured by the statistical distance between the distribution of generated embeddings, $X = \psi_{w}(g_{\theta}(Z)) \sim p_{\theta}$, and the distribution of real dataset embeddings, $Y = \psi_{w}(D) \sim p_{data}$. In practice, this distance is estimated using finite samples of size $n$, denoted as the empirical distributions $\hat{p}_{n,\theta}$ and $\hat{p}_{n,data}$ (see Appendix~\ref{app:defs}). Any measure of statistical distance between these distributions can be used, and we consider in this work several inception distances, defined as follows for any distance or divergence $\Delta$ between distributions \citep[see, e.g.][]{cover1999elements}.

\begin{definition}[General - Inception distance]
\label{def:distance}
    Let $X = \psi_w(g_\theta(Z)) \sim p_\theta$, $Y = \psi_w(D) \sim \pdata$.  
    For a distribution distance function $\Delta$, the performance of the model $g_\theta$ is given by $\Delta\text{ID}(p_\theta, \pdata)$.
    With a sample of size $n$, its empirical estimate is the plug-in value $\Delta\text{ID}(\hat p_{n, \theta}, \hat p_{n, \text{data}})$.
\end{definition}
\begin{figure}[h]
\begin{center}
\includegraphics[width=\linewidth]{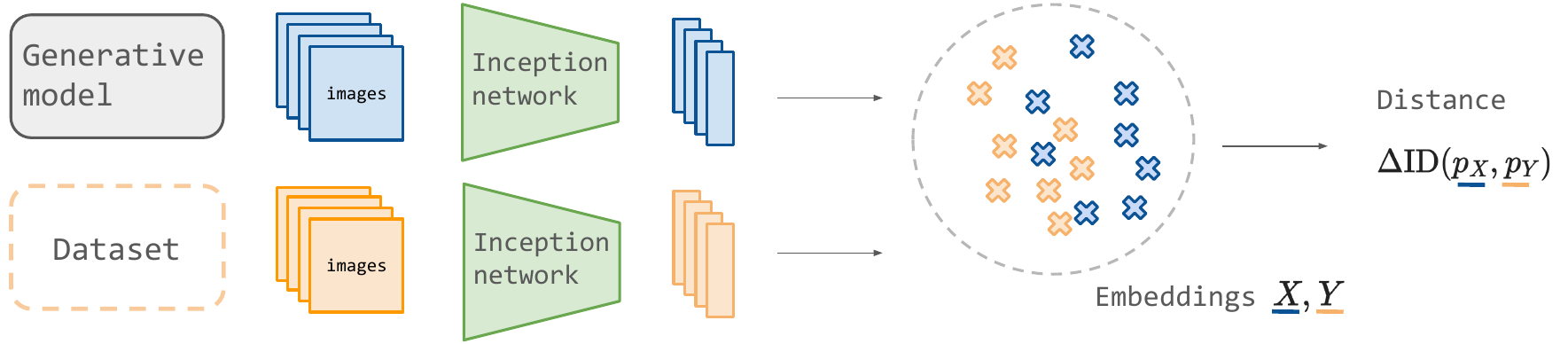}
\caption{General pipeline for evaluating generative model sampling distance to a dataset.\label{fig:pipeline}}
\end{center}
\end{figure}

\subsection{Existing method: Fr\'echet Inception Distance - FID}
FID is the most widely adopted instance of an inception distance. It measures the distance between two distributions based on their first two moments: the mean ($\mu$) and covariance ($\Sigma$), using the squared $2$-Wasserstein distance $W_2^2$ (see Appendix~\ref{app:ot})

\begin{definition}[Fr\'echet Inception Distance - FID]
Let $X = \psi_w(g_\theta(Z)) \sim p_\theta$ and $Y = \psi_w(D) \sim \pdata$ and $\mu_X, \Sigma_X$ and $\mu_Y, \Sigma_Y$ be the means and covariances of $p_{\theta}$ and $p_{data}$, respectively.
The FID is defined as the squared 2-Wasserstein distance between two fitted Gaussians: %
\[
\FID(p_\theta, \pdata) =  \|\mu_X - \mu_Y\|^2  + \tr(\Sigma_X + \Sigma_Y - 2(\Sigma_Y\Sigma_X)^{1/2})\, .
\]
In practice, this is computed using empirical sample means and covariances $\hat{\mu}_n$ and $\hat{\Sigma}_n$.
\end{definition}
This approach was originally motivated as a way to bypass the high computational and statistical complexity of a direct, sample-based Wasserstein distance by using a Gaussian approximation.

\paragraph{Drawbacks}    
Despite its widespread use and role as de facto standard metric, there are several drawbacks in using this distance, as noted in several works \citep[see, e.g.][]{karras2017progressive,stein2023exposing, jayasumana2024rethinking, bischoff2024practical, yang2026representation}
\begin{itemize}[topsep=0pt,itemsep=2pt,parsep=2pt,leftmargin=10pt]
    \item[-] Computing this distance is based on the estimate $\hat \Sigma_n$ of a $d$-by-$d$ covariance. It is rank-deficient for $n \le d$, creating numerical and statistical issues when estimating the second term, unless the sample size is at least of order $d$, which is $2048$ for inception networks. For images, this means that the sample sizes usually used are $10$k or $50$k~\citep{binkowski2018demystifying,chong2020effectively}, with a high impact on evaluation time and cost.
    
    \item[-] FID is also not a proper distance~\citep{jayasumana2024rethinking}: two distributions can have the same mean and covariance and be very different~\citep[see, for example,][Section 30]{billingsley2017probability}. We show that, as a consequence, it is not robust, and this fact can be leveraged to artificially reduce the FID without visually altering the images (see Section~\ref{sec:exp-hacking}).
\end{itemize} 

We also evaluate other inception-based distances proposed as metrics, as means of comparison, such as the {\em Maximum mean discrepancy} (MMD), or Sinkhorn divergence. We provide full definitions and a discussion in appendix (Section~\ref{app:metrics}), and include them in comparisons.

\section{Proposed method: Monge Inception Distance - \texttt{MIND}}
We propose the following metric to overcome these challenges, based on the sliced Wasserstein distance which has several known advantages, both in terms of statistical and computational complexity. It is an average of the Wasserstein distances of the distribution of projections, over all unit directions \citep[see, e.g.][and Appendix~\ref{app:ot} in this work for details]{rabin2011wasserstein, nadjahi:tel-03533097}.

The sliced Wasserstein distance is a proper distance - it is equal to $0$ if and only if both distributions are equal \citep[][Proposition 5.1.2]{bonnotte2013unidimensional}. We use this approach for our \texttt{MIND} metric, taking the average of Wasserstein distances projected along finitely many unit directions for an estimate.

\begin{definition}[Monge Inception Distance]
\label{def:mind}
Let $X = \psi_w(g_\theta(Z)) \sim p_\theta$ and \mbox{$Y = \psi_w(D) \sim \pdata$}, and $\mathcal{U}(S)$ be the uniform distribution on the unit sphere. $\MIND$ is given by averaging $W^2_2$ distances for projections of the distributions along unit directions, with a multiplicative scaling $\alpha=3d$
\[
\MIND(p_\theta, \pdata) = \alpha \E_{u\sim\cU(S)} [W^2_2(u^\top p_\theta, u^\top \pdata)]\, ,
\]
where $u^\top p_\theta$ (resp. $u^\top \pdata$) is the distribution of $u^\top X$ when $X \sim p_\theta$ (resp. of $u^\top Y$ when $Y \sim \pdata$) and $d$ is the data dimension.

For finite samples $(X_j)_{j\in[n]}$, $(Y_j)_{j\in[n]}$, random unit directions  $(u_i)_{i \in [M]}$ and $\alpha=3d$, it is given by
\[
\MIND(\hat p_{n, \theta}, \hat p_{n, \text{data}}) = \frac{\alpha}{M} \sum_{i=1}^M W^2_2(u_i^\top \hat p_{n, \theta}, u_i^\top \hat p_{n, \text{data}}) = \frac{\alpha}{nM}\sum_{i=1}^M \sum_{j=1}^n |\text{sort}(u_i^\top X)_j -  \text{sort}(u_i^\top Y)_j|^2\, .
\]
\end{definition}

\paragraph{Remarks}
\begin{wrapfigure}[21]{r}{0.6\textwidth}
    \centering
    \includegraphics[width=\linewidth]{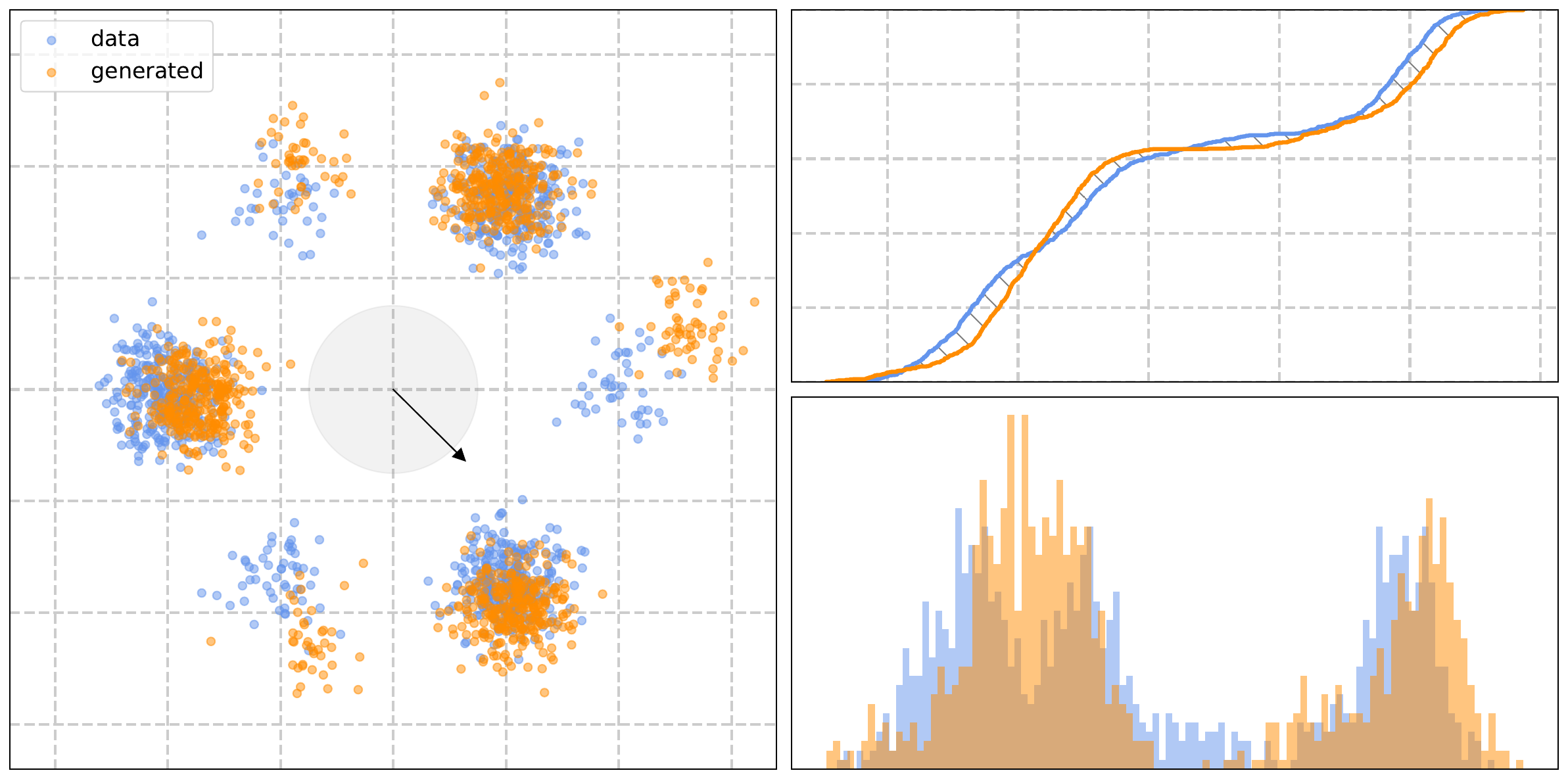}
    
    \cprotect\caption{Computation of \texttt{MIND} based on the idea of Sliced Wasserstein, illustrated in 2D with a single projection. (\textbf{Left}) Two samples of synthetic embeddings (orange and blue), along with the unit sphere and a random unit direction $u$. 
    (\textbf{Bottom Right}) The two histograms of distributions of the projections along $u$.
    (\textbf{Top Right}) The associated cumulative distribution functions (cdf), the hatched area is related to 1D Wasserstein distances along $u$: it is the $W_1$ distance, and used for all convex costs with pairwise sorted distances.}
\end{wrapfigure}
\begin{itemize}[topsep=0pt,itemsep=2pt,parsep=2pt,leftmargin=10pt]
\item[-] \texttt{MIND} relies on two finite sample estimates: the $1$D Wasserstein distance over samples $X_j$ and $Y_j$ of size $n$, and the expectation  $\E_{u\sim \cU(S)}$ over $M$ random unit directions $u_i$.

\item[-] Although we adopt the name of Inception Distance for consistency with established literature, the formulation of \texttt{MIND} does not depend on the Inception architecture. The mapping function $\psi_w$ can represent any feature extractor; consequently, \texttt{MIND} serves as a general-purpose tool for evaluating distributional similarity across diverse data modalities and embedding models.

\item[-] This $1$D formulation allows a more stable evaluation using an order of magnitude fewer samples -- with $n$ of order $5$k rather than $50$k (see Sections~\ref{sec:exp-analysis} and~\ref{sec:exp-sample}). Furthermore, because the sliced Wasserstein distance is a proper distance, having matching means and covariance matrices is not sufficient for $\MIND$ to be zero, making it inherently robust to moment-matching hacking (see Section \ref{sec:exp-hacking}). 

\end{itemize}
\begin{itemize}[topsep=0pt,itemsep=2pt,parsep=2pt,leftmargin=10pt]
\item[-] Leveraging the exact solution of $1$D transport problem \citep[see, e.g.][and Appendix~\ref{app:ot} in this work]{peyre2019computational}, the distance simplifies to pair-wise difference between sorted elements:
\[
W_2^2(\hat p_n, \hat q_n) = \frac{1}{n} \sum_{j=1}^n |\text{sort}(x)_j - \text{sort}(y)_j|^2\,,
\]
where $\text{sort}: \R^n \to \R^n$ is the function that maps a vector $x\in \R^n$ to its copy sorted in nondecreasing order, i.e. $\text{sort}(x) = (x_{\sigma(1)},\ldots,x_{\sigma(n)})^\top$ such that $x_{\sigma(1)} \le \ldots \le x_{\sigma(n)}$ (ties do not make this function ambiguous). Since the sorting operation runs in $O(n\log n)$ time, $\MIND$ avoids the need to estimate or store high-dimensional objects (e.g. $d\times d$ matrices for FID, $n\times n$ matrices for other distances). Similarly to FID, we use the square of the distance rather than taking a square root of the average. We also use a multiplicative scaling factor $\alpha$---see discussion in Section~\ref{sec:hyp}.

\end{itemize}
\begin{figure}

\hspace{3em}
\begin{subfigure}{.45\textwidth}
\subcaption{JAX}
 \centering
\fbox{
\includegraphics[width=\linewidth]{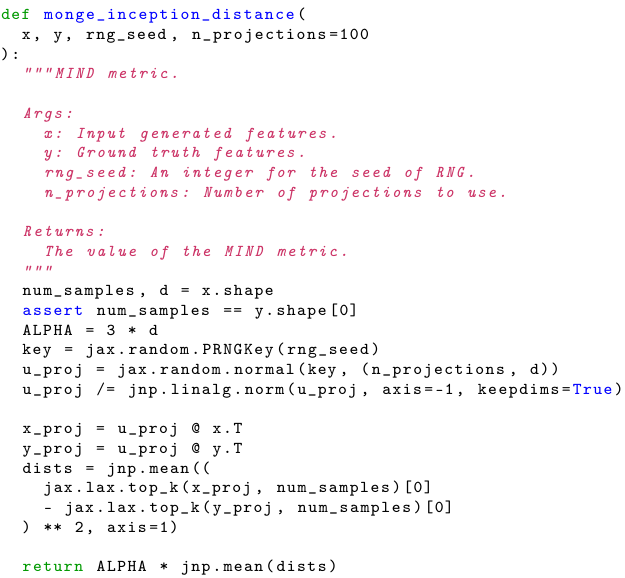}
}
\end{subfigure}
\hfill
\begin{subfigure}{.345\textwidth}
 \centering
 \subcaption{PyTorch}
\fbox{
\includegraphics[width=\linewidth]{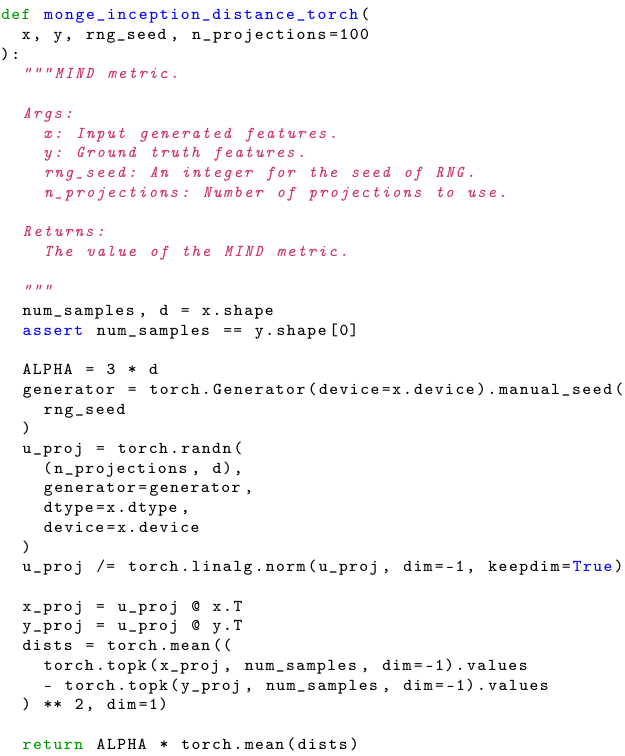}
} 
\end{subfigure}
\hspace{3em}
\cprotect \caption{JAX (a) and PyTorch (b) implementation of \texttt{MIND}.}
\label{fig:implementation}
\end{figure}

\section{Experiments}
\subsection{Implementation}
As noted above, estimating $\MIND$ on two samples of size $n$ is both computationally and conceptually easy. It requires only trivially parallelizable projections and sorting operations. As such, it is particularly adapted to modern accelerated-oriented hardware and software. We provide in  Figure~\ref{fig:implementation} both a JAX \citep{bradbury2018jax} and PyTorch \citep{paszke2019PyTorch} implementation, in the form of a short code snippet that can be directly used in an evaluation pipeline. We also provide in Section~\ref{sec:computation} and~\ref{sec:memory} experimental results showcasing the computation time and memory advantages of this algorithm compared to other methods.

\subsection{Hyperparameter choices}
\label{sec:hyp}

As noted in Definition~\ref{def:mind}, we scale the $\MIND$ metric by a multiplicative factor $\alpha > 0$. This is done so that the order of magnitude of this metric matches those of FID. This proximity helps to compare values of $\MIND$ to those of FID, and is chosen to favor adoption. Based on an analysis on ImageNet-64, we have found that taking $\alpha = 3 \times d \approx 6,000$ is a good fit, especially later in a training run (where $d=2,048$ is the dimension of the embedding space) - see Figure~\ref{fig:train_fid_mind}.  We also observe that $\MIND_{5k}$ has more range than FID for any sample size, with higher values early in the run (above even those of $\FID_{5k}$), and aligned with $\FID_{50k}$ later in the run, and better aligned with the number of steps in a training run (see Figure~\ref{fig:intro}, right). We also compute $\MIND$ and FID score with features of various dimension. The features are obtained by truncating the Inception-v3 features to the target dimension. Figure~\ref{fig:train_fid_mind} (Right) shows that the $\MIND$ remains an affine relation with respect to FID while varying the dimension of the feature space in the $\log$-$\log$ plot---justifying the choice of the scaling factor $\alpha\propto d$.

\begin{figure*}[ht]
  \centering
  \begin{subfigure}{.47\textwidth}
  \centering
  \includegraphics[width=\linewidth]{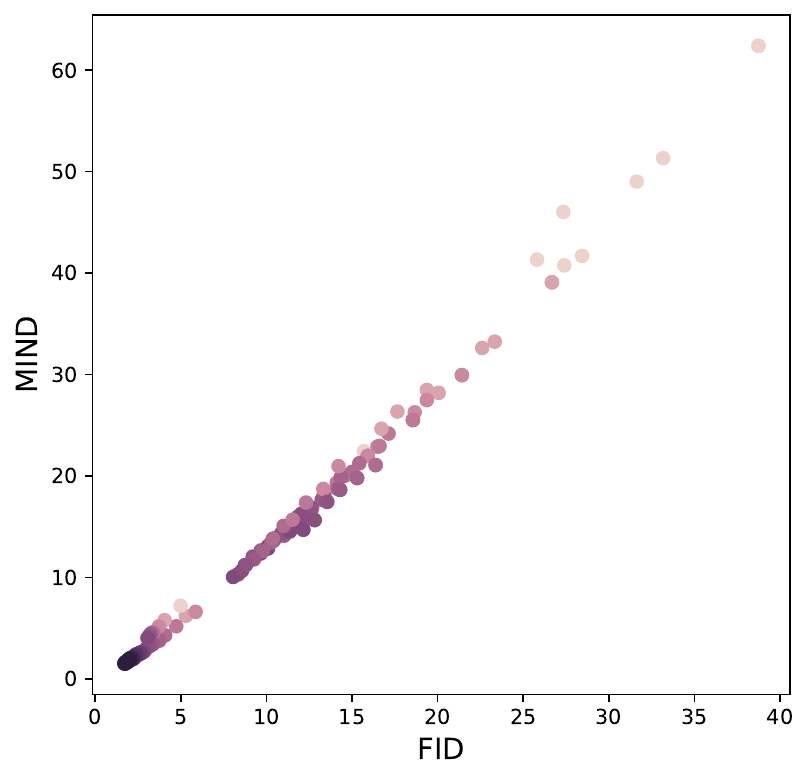}
  \end{subfigure}
  \begin{subfigure}{.48\textwidth}
  \centering
  \includegraphics[width=\linewidth]{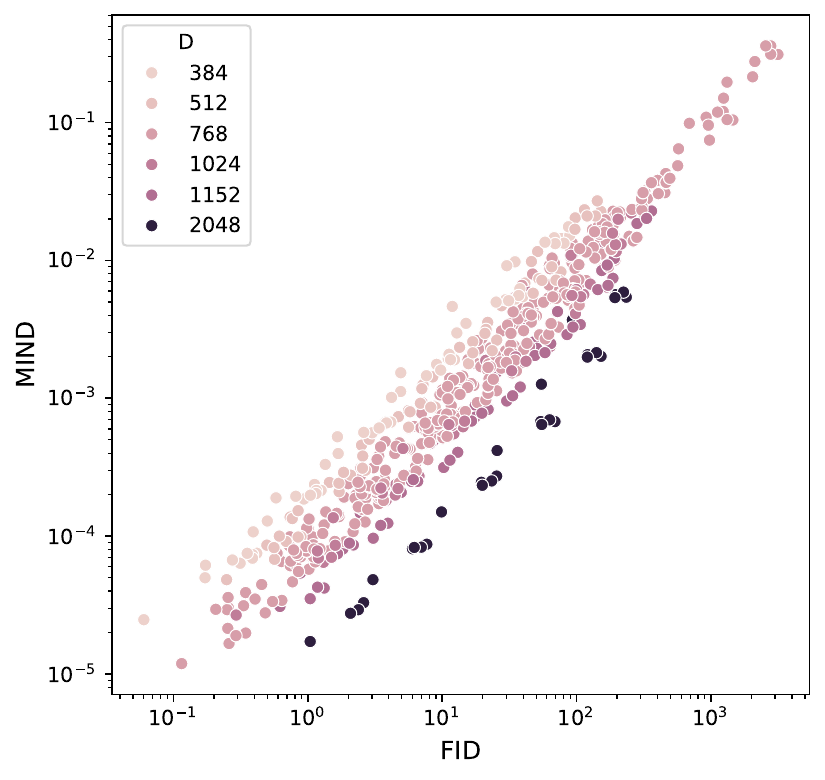}
  \end{subfigure}
    \caption{$\MIND_{5k}$ and $\FID_{50k}$ (and other sample sizes) for a model at different steps of training on ImageNet-64. \textbf{Left}: All $\MIND$ metrics are rescaled by a factor $\alpha \approx 6,000$ chosen to optimize proximity with FID, colors indicating the step at which the metric is evaluated. \textbf{Right}: $\MIND$ without scaling factor for various embedding dimension, colors indicating the dimension in which the metric is evaluated.}
  \label{fig:train_fid_mind} %
\end{figure*}
\subsection{\texttt{MIND} analysis}
\label{sec:exp-analysis}
We illustrate the behavior of the \texttt{MIND} metric by analyzing its dependency on $n$ (the number of samples from the distributions) and $M$ (the number of random projections). Fixing a number $k>0$ (varies in different settings), we evaluate its ability to correctly order $k$ different distributions $p^1,\ldots p^k$ relative to the true data distribution. 
Specifically, we calculate the probability of error in correctly ranking the sequence of distances $\MIND(\hat p_n^1, \hat p_{n, \text{data}}), \ldots, \MIND(\hat p_n^k, \hat p_{n, \text{data}})$. This measures the ability of the metric to distinguish different images from the elements of the dataset. Our observations indicate that $n = 5,000$ samples is sufficient to reliably distinguish these distributions (we benchmark this against other metrics quantitatively in Section~\ref{sec:exp-sample}).

\begin{figure*}[t]
\centering
\includegraphics[width=\textwidth]{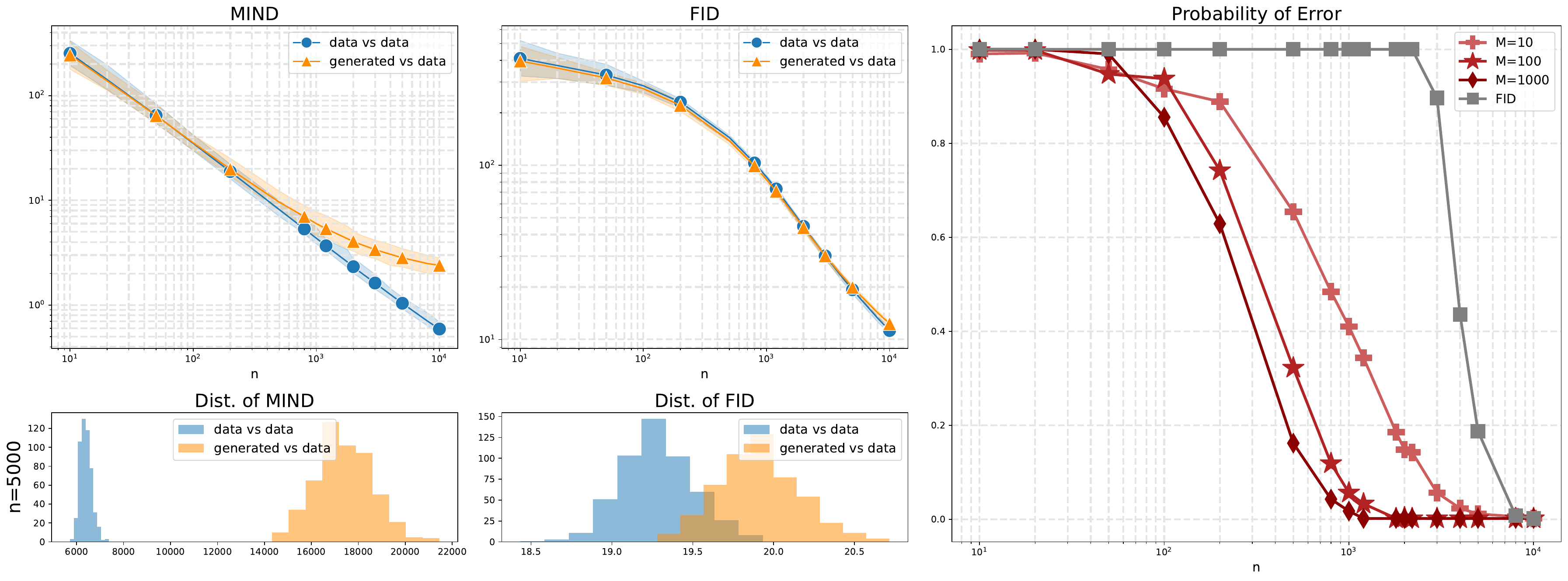}
\caption{(\textbf{Top Left and Middle}) Behavior of the \texttt{MIND} and FID metric in $n$, to distinguish true images from the dataset (base - in blue) from generated images (model - in orange). (\textbf{Bottom Left and Middle}) Histogram of the trials for $n=5,000$ - A bigger gap is better. (\textbf{Right}) Probability of error defined in Section~\ref{sec:gen-vs-true} for three values of $M \in \{10, 100, 1000\}$.}
\label{fig:overlaps_all_m}
\end{figure*}

We also plot the dependency of the estimated metric on $M$, the number of uniformly chosen random projections. This dependency is easier to analyze, since the Monte-Carlo estimate is obtained by averaging unbiased terms to compute the expected Wasserstein distance. We observe that choosing $M$ in the range $[100,\, 1000]$ is sufficient (see Appendix~\ref{app:results}).

\subsection{Metric comparison}
\label{sec:exp-sample}
\begin{wrapfigure}[19]{r}{0.5\textwidth}
    \centering
    \vspace{-0.6cm}
    \includegraphics[width=0.9\linewidth]{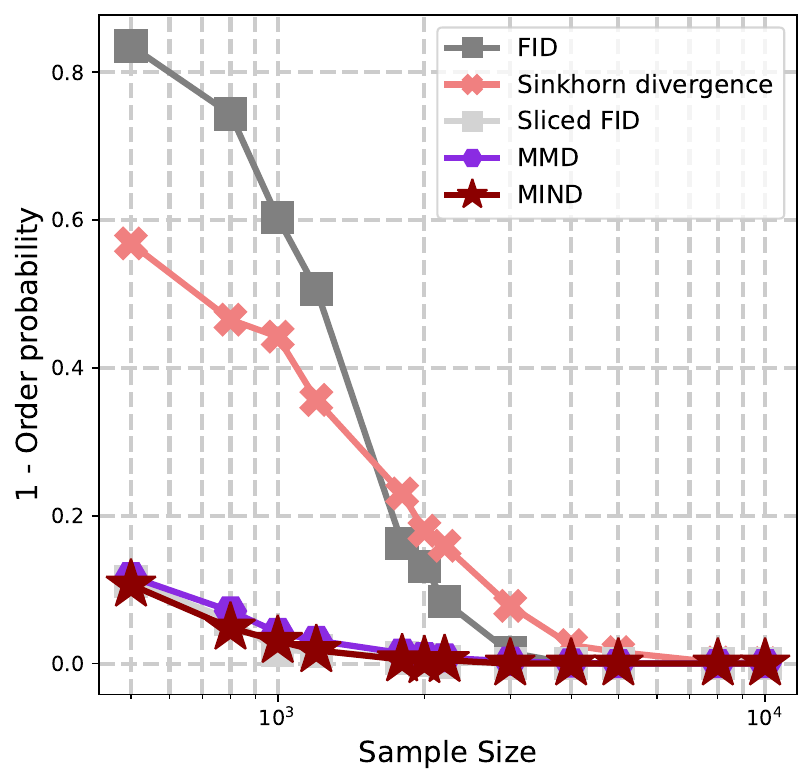}
    \caption{Sample complexity measured by the probability of error for the correct order at five different steps of training.    \label{fig:order}}
\end{wrapfigure}

Running evaluations during training of a diffusion model, we observe that instead of using $\FID_{50k}$ (commonly used post-training because of the cost and time associated with the high sample size), we can use $\MIND_{5k}$ (we evaluate their precisions more quantitatively in the rest of this section). As visible in Figure~\ref{fig:train_fid_mind}, these two metrics are highly correlated, especially later during training.

In order to compare different metrics in a principled fashion, we evaluate how useful they are to distinguish distributions. This provides natural {\em tasks} for which these metrics can be evaluated as {\em methods}, through the lens of statistical hypothesis testing. This can also be understood as a comparison in distribution of the metrics (for random samples), rather than a single value. Given sample size $n$ and metric $\Delta$, we state our three statistical hypothesis testings in the following.

\subsubsection{Generated vs. true data}
\label{sec:gen-vs-true}

This is done by comparing two distributions $p_\theta$ (for some pre-trained model $g_\theta$, with parameters $\theta$) with $\pdata$, and estimating $\Delta(\hat p_{n, \theta}, \hat p_{n, \text{data}})$. Our diffusion model utilizes a U-Net backbone~\citep{ronneberger2015u,nichol2021improved} trained on Imagenet-64. Experiments relying on real data use a fixed set of 100,000 original Imagenet-64 images. Conversely, for evaluations involving data generated from models, we generate a dedicated fixed set of 50,000 samples from each model under evaluation.

We are comparing the values of the metric under two settings, where $\hat p_{n, \theta}$ is a distribution of $n$ samples generated from a trained model, and $\hat p_{n, \text{data}}$ and $\hat p'_{n, \text{data}}$ are two independent samples of size $n$ from the data. The probability of error is defined as:
\[
    \fP\big(\Delta(\hat p_{n, \text{data}}, \hat p'_{n, \text{data}})\ge \Delta(\hat p_{n, \text{data}}, \hat p_{n, \theta})\big)\, .
\]
This measures the ability of the metric to distinguish generated images from elements of the dataset. The results are given in Figure~\ref{fig:overlaps_all_m}. We remark that, $\MIND$ is able to separate the two distributions as soon as $n\ge 5,000$ (Figure~\ref{fig:overlaps_all_m} Left column), while FID requires more than $10$k samples to do so (Figure~\ref{fig:overlaps_all_m} middle column).
\subsubsection{Monotonicity}

We compare $\MIND$ with other metrics using a diffusion model $g_\theta$ trained on the ImageNet-64 dataset \citep{deng2009imagenet}, from which we selected five models, $g_{\theta_1},\dots,g_{\theta_5}$, corresponding to five distinct training checkpoints and generate $50$k images with each of them. The probability of error in ranking these checkpoints correctly is:
    \[
    1 - \fP\big(\Delta(\hat p_{n, \text{data}}, \hat p_{n, \theta_1})\ge \ldots\ge \Delta(\hat p_{n, \text{data}}, \hat p_{n, \theta_k})\big)\, .
    \]
For several sample sizes $n$ ranging from $10$ to $10$k, we perform $512$ independent trials for each metric. We observe that \texttt{MIND}, MMD and sliced FID achieves similar performance in this test (Figure \ref{fig:order}).

\subsubsection{Perturbations}
We consider three different types of image perturbations, the severity of perturbation is given by a parameter $\varepsilon$. Each experiment is performed with $512$ independent trials. We measure the performance of each metric using
\begin{align*}
1 - \fP\big(\Delta(\hat p_{n, \text{data}}, \hat p_{n, \text{data}, \varepsilon_1}) &\le\ldots\le  \Delta(\hat p_{n, \text{data}}, \hat p_{n, \text{data}, \varepsilon_k})\big)\, .
\end{align*}
which is the probability of failing to order all perturbation levels. The results are summarized in Figure~\ref{fig:pert-mixture}. We highlight that, for $n\ge 5,000$, $\MIND$ achieves the same level of performance as MMD while FID is worse on all tasks.
\paragraph{Gaussian blur.} We select a perturbation level \mbox{$\varepsilon\in\{0.2, 0.4, 0.6, 0.8, 1.0\}$}. We apply a Gaussian filter with standard deviation $\varepsilon$ to each image in ImageNet-64.
\paragraph{Rectangle.}We select a perturbation level \mbox{$\varepsilon\in\{0.05, 0.1, 0.15, 0.2\}$}. We randomly place $5$ squares of size $10\times10$ with $\varepsilon$ opacity in each image of ImageNet-64.
\paragraph{Mixture of datasets.} We select a perturbation level \mbox{$\varepsilon\in\{1\%, 3\%, 5\%, 7\%, 10\%\}$}. We draw samples with proportion of $(1-\varepsilon)$ from ImageNet-64 and of $\varepsilon$ from CelebA \citep{liu2015faceattributes}.

\begin{figure*}[ht]
    \centering
    \includegraphics[width=\linewidth]{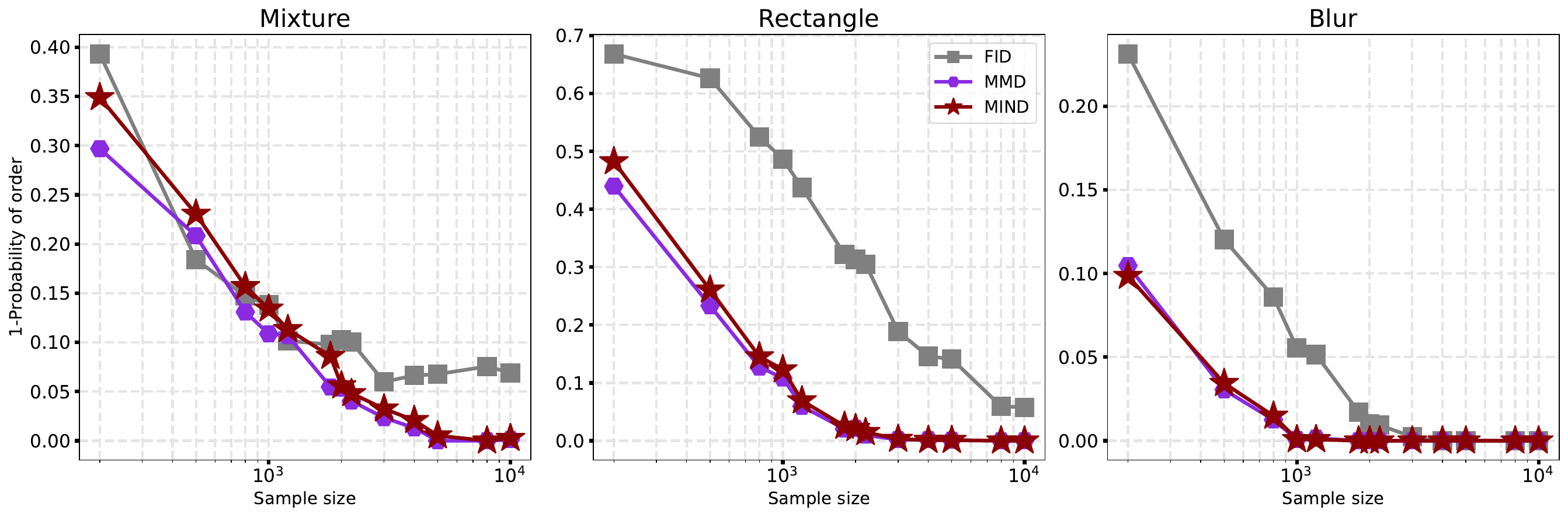}
    \caption{Sample complexity measured by the probability of detecting a small perturbation.}
    \label{fig:pert-mixture}
\end{figure*}

\subsection{Robustness to metric hacking with moment matching}
\label{sec:exp-hacking}
As mentioned above, one of the weaknesses of FID is that it is not a proper distance. Indeed, since this metric is only a function of the means and covariances of the considered distributions, if $p$ and $q$ share the same first and second moments, then $\FID(p, q) = 0$. This is not the case for proper metrics derving from distances such as \texttt{MIND}. We leverage this fact to create an artificial distribution of samples that have a desired mean and covariance. This construction is based on the following property whose proof is in Appendix~\ref{app:proof}. %
\begin{proposition}
\label{pro:target}
    Let $p$ be a target distribution over $\R^d$ with mean $\mu$ and covariance $\Sigma$, whose eigendecomposition is
    \[
    \Sigma = USU^\top =  \sum_{i=1}^r \lambda_i u_iu_i^\top\, .
    \]
Define the $2r$ vectors $v^{(+)}_i$ and $v_i^{(-)}$ indexed by $i \in [r]$, by
\[
v^{(+)}_i = \mu + \alpha u_i\, , \quad v^{(-)}_i = \mu - \alpha u_i\, ,
\]
with $\alpha = \sqrt{\textbf{Tr}(\Sigma)}$. Define $\pi^{(+)}_i = \pi_i^{(-)} = \lambda_i / (2\textbf{Tr}(\Sigma))$ and note that the $\pi_i$ are nonnegative and sum to 1. Let $\hat q$ be the distribution of the $v_i$'s, each with probability $\pi_i$, given by 
\[
    \hat q = \sum_{i=1}^r\pi_i^{(+)} \delta_{v_i^{(+)}} + \sum_{i=1}^r\pi_i^{(-)} \delta_{v_i^{(-)}}\,.
\]

It holds that
\[
\E_{\hat q}[v] = \mu\, , \quad \E_{\hat q}[(v - \mu)(v-\mu)^\top] = \Sigma\, , \quad \FID(\hat q, p) =0\, .
\]
\end{proposition}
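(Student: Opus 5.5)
The proof is a direct moment computation, and I would carry it out in three steps.

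\textbf{Weights.} I would first confirm that $\hat q$ is a probability distribution. Since $\Sigma$ is a covariance matrix, it is positive semidefinite, so the eigenvalues $\lambda_i$ with $i\in[r]$ are nonnegative. Hence $\pi_i^{(\pm)}\ge 0$. Moreover, $\sum_{i=1}^r \lambda_i = \textbf{Tr}(\Sigma)$, so
\[
\sum_i \bigl(\pi_i^{(+)}+\pi_i^{(-)}\bigr) = \sum_i \lambda_i/\textbf{Tr}(\Sigma) = 1 .
\]
The degenerate case $\Sigma=0$ has to be set aside: there $r=0$, and the natural convention is $\hat q=\delta_\mu$. Otherwise $\textbf{Tr}(\Sigma)>0$, so the definitions make sense.

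\textbf{Mean.} The atoms come in symmetric pairs $\mu\pm\alpha u_i$ with equal weights. Pairing them gives
\[
\E_{\hat q}[v] = \mu\sum_i 2\pi_i + \alpha\sum_i \bigl(\pi_i^{(+)}-\pi_i^{(-)}\bigr)u_i = \mu .
\]

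\textbf{Covariance.} Since the mean is $\mu$, the centered atoms are $\pm\alpha u_i$. Each of them contributes $\alpha^2 u_iu_i^\top$ to the second moment, so
\[
\E_{\hat q}[(v-\mu)(v-\mu)^\top] = \sum_{i=1}^r 2\cdot\frac{\lambda_i}{2\textbf{Tr}(\Sigma)}\,\textbf{Tr}(\Sigma)\, u_iu_i^\top = \sum_i \lambda_i u_iu_i^\top = \Sigma .
\]
Here I use the eigendecomposition from the statement, with orthonormal $u_i$ corresponding to the nonzero eigenvalues. The choice $\alpha^2=\textbf{Tr}(\Sigma)$ is exactly what cancels the normalization in the weights.

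\textbf{FID.} Finally, I would plug these moments into the FID definition with $\mu_X=\mu_Y=\mu$ and $\Sigma_X=\Sigma_Y=\Sigma$. The mean term vanishes. For the trace term, $(\Sigma\Sigma)^{1/2}=(\Sigma^2)^{1/2}=\Sigma$, because $\Sigma$ is PSD and its PSD square root of $\Sigma^2$ is unique (take $U S U^\top$). The trace term is therefore $\tr(2\Sigma-2\Sigma)=0$.

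None of these steps is a real obstacle. The only points needing care are the nonnegativity of the $\lambda_i$ (from PSD-ness), the $\Sigma=0$ edge case, and justifying $(\Sigma^2)^{1/2}=\Sigma$ for the principal square root.
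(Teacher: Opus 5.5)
Your proposal is correct and follows the same route as the paper's proof. Both compute the mean and covariance of $\hat q$ directly, using the symmetric pairing of atoms and the cancellation from $\alpha^2 = \tr(\Sigma)$, and then substitute the matched moments into the FID formula. Your remarks on the nonnegativity of the $\lambda_i$, the degenerate case $\Sigma = 0$, and $(\Sigma^2)^{1/2} = \Sigma$ for the PSD square root are small points of rigor that the paper leaves implicit.
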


\subsection{Moment matching procedure}
This proposition can be leveraged to perform {\em metric hacking} with moment matching: For a batch of $n$ images $a^0$ with embedding distribution $\hat q^{0}$ we construct $a = a^0 + \varepsilon$ whose embeddings have distribution $\hat q$ such that the metric $\Delta(\hat q, \hat p_{n, \text{data}})$ is much smaller than  $\Delta(\hat q^{0}, \hat p_{n, \text{data}})$ (for some data distribution $\pdata$) by optimizing the moments of these embeddings, using the target vectors given by Proposition~\ref{pro:target} with no visually discernible alteration (see Figure~\ref{fig:hack-vis} in Appendix~\ref{app:hack}).

We do so in the following manner: for $n = 2r$ and a batch $a^{0}\in \R^{2r \times [\text{dims}]}$ of $2r$ images, each of shape $[\text{dims}]$ (e.g. $[512, 512, 3]$), and a target distribution $\pdata$ over $\R^d$,  we consider the following objective, aiming to give each $a_i$ an embedding close to the target $v_i$
\[
\min_{a \in \R^{2r \times [\text{dims}]}} \ell (a) = \min_{a \in \R^{2r \times [\text{dims}]}} \sum_{i=1}^{2r} \|\psi_w(a_i) - v_i \|^2\,.
\]

\begin{wraptable}[9]{r}{0.35\textwidth}
\vspace{-.95cm}
    \centering
    \begin{tabular}{lc}
    \toprule
    \textbf{Metric} & ratio\\
    \midrule
    \textbf{FID}  & 11.2\% \\
    \textbf{$\mu \FID$}  &  2.6\% \\
    \textbf{$\sigma \FID$}  & 4.2\% \\
    \textbf{MMD}  & 12.2\% \\\\
    \textbf{\underline{\texttt{MIND}}}  &  31.1\% \\
    \bottomrule
    \end{tabular}
    \caption{Robustness of several metrics under moment matching}
    \label{tab:robustness}
\end{wraptable}
We initialize $a$ at $a^0$ that is $2r$ copies of the same image. This optimization problem is highly parallelizable since the loss is fully separable over each of the $a_i$, and we can use stochastic based optimization methods to solve it. If the batch $a$ satisfy $\ell(a) = 0$, then the FID of the distribution of the $a_i$ with probabilities $
\pi_i$ is also $0$, and we show that optimizing this loss reduces the FID significantly.

In this experiment, we use a full-rank batch, $r=2048$, the dimension of the latent space. Therefore, the total batch size is $n=4096$ and we separate the optimization problem. We use in our evaluation $M=1000$ for \texttt{MIND} and $50$k to compute the reference mean and covariance for the FID. The results summarized in Table 1 show that several of these metrics are highly sensitive to moment matching hacking, with only $~10\%$ or less of the metric remaining for the baseline metrics (and much less for sample-efficient metric versions of the FID), and that while affected, the \texttt{MIND} metric is much more robust.

\subsection{Computation time comparison}
\label{sec:computation}

We compare the running time of {\em computing} the different metrics on TPUv4, given two sets of embeddings with sample size $n$. We emphasize that this is only the time to compute the metric, not to generate the samples, which is roughly linear in $n$. In Figure~\ref{fig:walltime-walltime}, we observe that computing \texttt{MIND} at its recommended sample size of $5$k is more than $2$ orders of magnitude faster than computing FID. This is an additional difference, on top of the time necessary to sample a much larger sample when using FID.

\subsection{Peak memory comparison}
\label{sec:memory}

We compare the peak memory required to compute the different metrics on a TPUv4, using two sets of embeddings with sample size $n$. Note that these measurements reflect only the additional memory consumed during metric computation which do not include the memory occupied by the input data itself, the latter results are provided in Appendix~\ref{app:peak-memory}. In Figure~\ref{fig:walltime-mem}, we highlight that computing \texttt{MIND} at its recommended sample size ($n=5$k) requires over an order of magnitude less memory than computing either MMD or FID. (Note that the curves for MMD and $\MIND$ collapse across different input dimensions, resulting in overlapping lines).

\begin{figure}
\begin{subfigure}{.48\textwidth}
\centering
\includegraphics[width=\linewidth]{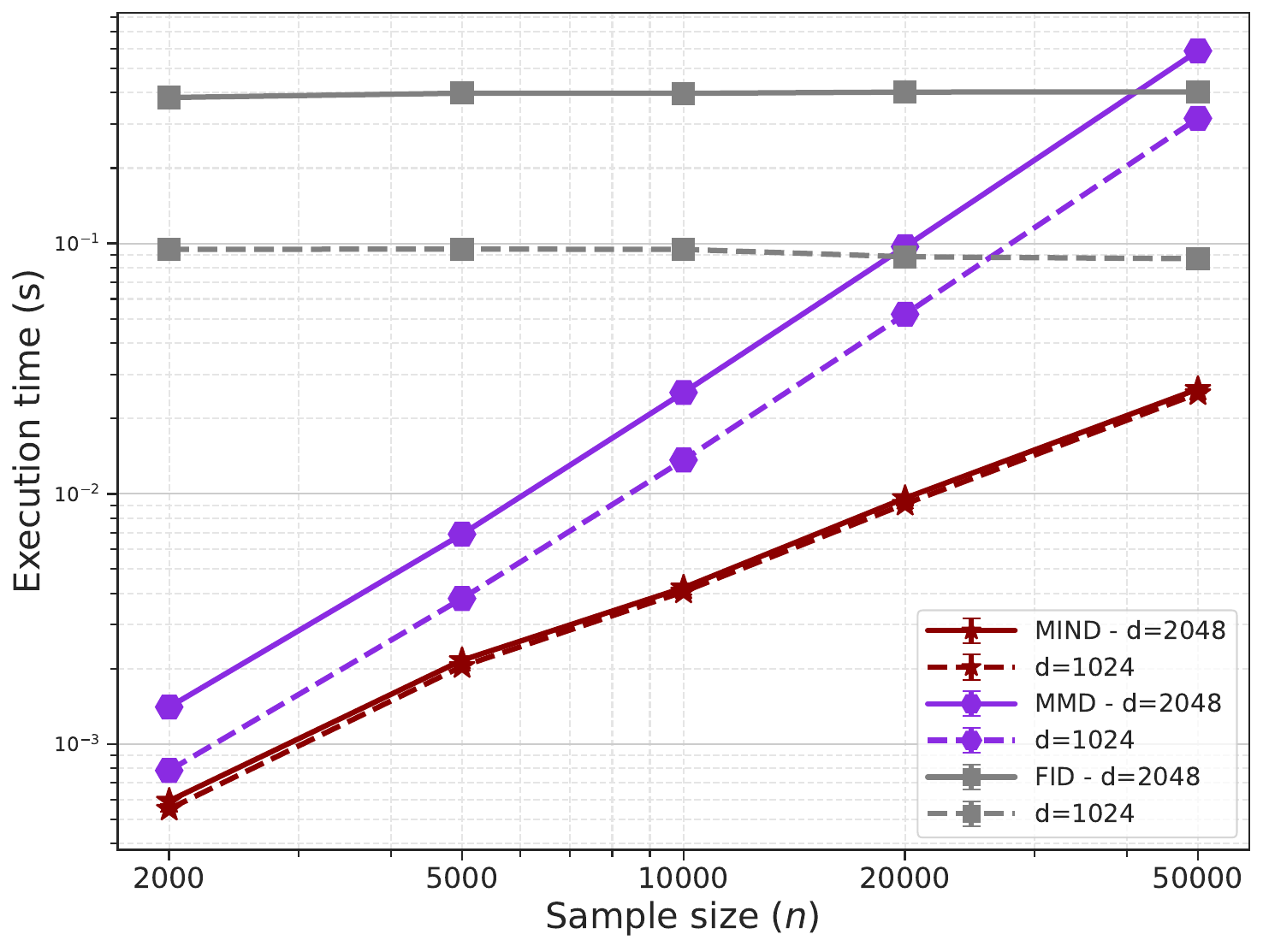}
\subcaption{Execution time}
\label{fig:walltime-walltime}
\end{subfigure}
\begin{subfigure}{.48\textwidth}
\centering
\includegraphics[width=\linewidth]{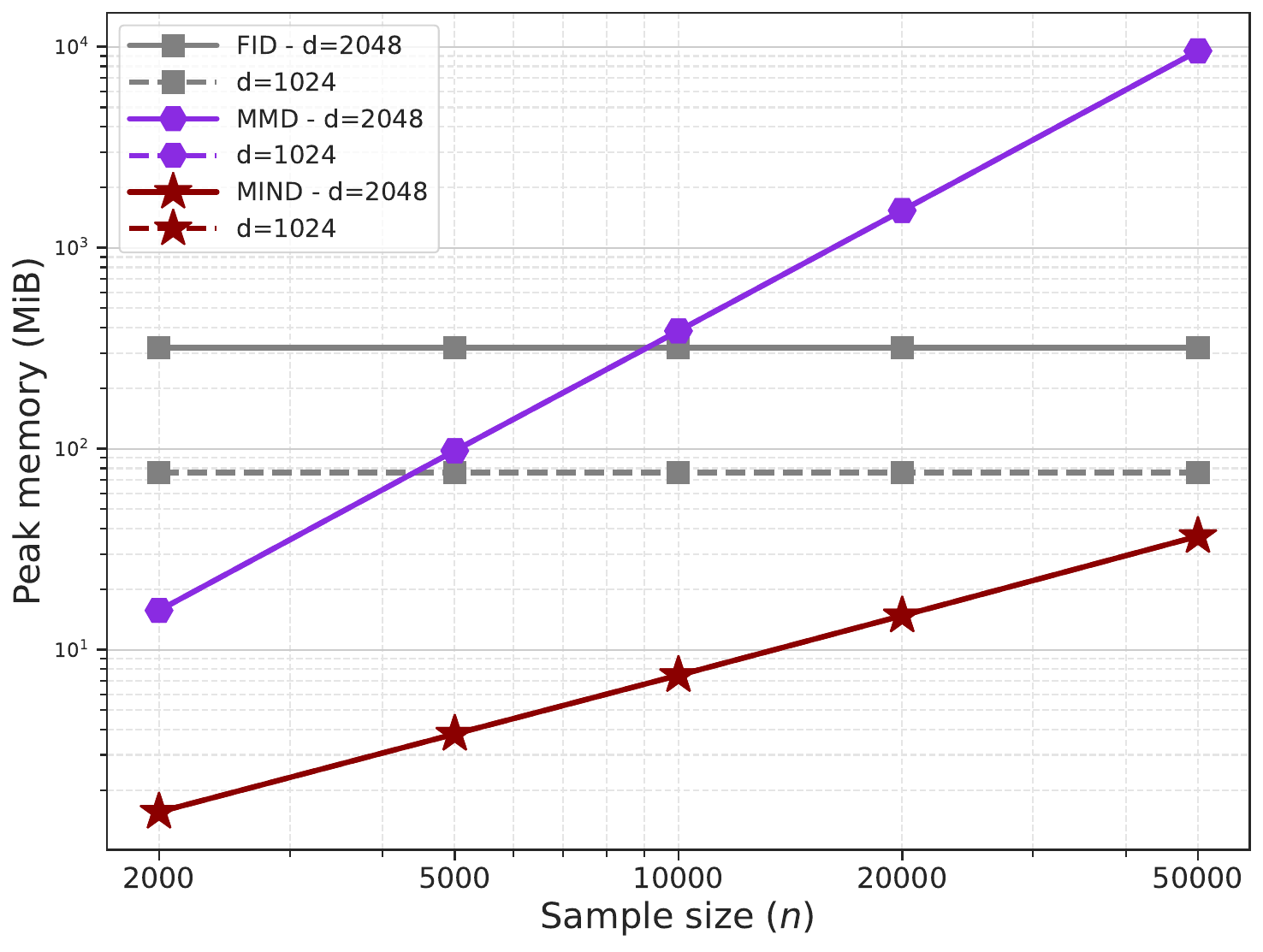}
\subcaption{Peak Memory}
\label{fig:walltime-mem}
\end{subfigure}
\cprotect\caption{Walltime and peak memory comparison for $\MIND$, MMD, and FID  \label{fig:walltime}}
\end{figure}

\section*{Conclusion}
In this work, we introduced \texttt{MIND}, a metric for evaluating generative models that addresses statistical and computational limitations of FID. Our empirical results demonstrate that \texttt{MIND} is faster to compute and achieves stable evaluations with sample sizes as low as $2$k, compared to the $50$k typically required for FID. Furthermore, as a proper distance metric, \texttt{MIND} exhibits better robustness to moment-matching adversarial attacks than other metrics, while being affected by it. As a purely statistical metric, \texttt{MIND} measures the distributional distance to a reference dataset (such as the training data). It is not designed to evaluate other qualitative aspects of the generated images, such as visual aesthetics or text legibility. We believe this metric provides a rigorous, efficient, and reliable standard for assessing the quality of modern generative models.

\newpage
\bibliographystyle{apalike}
\bibliography{ref}

@article{Dudley1969TheSO,
  title={The Speed of Mean {Glivenko}-{Cantelli} Convergence},
  author={Richard M. Dudley},
  journal={Annals of Mathematical Statistics},
  year={1969},
  volume={40},
  pages={40-50},
  url={https://api.semanticscholar.org/CorpusID:121048606}
}

@inproceedings{radford2021learning,
  title={Learning transferable visual models from natural language supervision},
  author={Radford, Alec and Kim, Jong Wook and Hallacy, Chris and Ramesh, Aditya and Goh, Gabriel and Agarwal, Sandhini and Sastry, Girish and Askell, Amanda and Mishkin, Pamela and Clark, Jack and others},
  booktitle={International conference on machine learning},
  pages={8748--8763},
  year={2021},
  organization={PMLR}
}

@article{simeoni2025dinov3,
  title={{DINOv3}},
  author={Sim{\'e}oni, Oriane and Vo, Huy V and Seitzer, Maximilian and Baldassarre, Federico and Oquab, Maxime and Jose, Cijo and Khalidov, Vasil and Szafraniec, Marc and Yi, Seungeun and Ramamonjisoa, Micha{\"e}l and others},
  journal={arXiv:2508.10104},
  year={2025}
}

@phdthesis{bonnotte2013unidimensional,
  title={Unidimensional and evolution methods for optimal transportation},
  author={Bonnotte, Nicolas},
  year={2013},
  school={Universit{\'e} Paris Sud-Paris XI; Scuola normale superiore (Pise, Italie)}
}

@article{yang2026representation,
  title={Representation {Fr\'echet} Loss for Visual Generation},
  author={Yang, Jiawei and Geng, Zhengyang and Ju, Xuan and Tian, Yonglong and Wang, Yue},
  journal={arXiv:2604.28190},
  year={2026}
}

@article{karras2017progressive,
  title={Progressive growing of {GAN}s for improved quality, stability, and variation},
  author={Karras, Tero and Aila, Timo and Laine, Samuli and Lehtinen, Jaakko},
  journal={arXiv:1710.10196},
  year={2017}
}

@article{bischoff2024practical,
  title={A practical guide to sample-based statistical distances for evaluating generative models in science},
  author={Bischoff, Sebastian and Darcher, Alana and Deistler, Michael and Gao, Richard and Gerken, Franziska and Gloeckler, Manuel and Haxel, Lisa and Kapoor, Jaivardhan and Lappalainen, Janne K and Macke, Jakob H and others},
  journal={arXiv:2403.12636},
  year={2024}
}

@inproceedings{
binkowski2018demystifying,
title={Demystifying {MMD} {GAN}s},
author={Mikołaj Bińkowski and Dougal J. Sutherland and Michael Arbel and Arthur Gretton},
booktitle={The Sixth International Conference on Learning Representations},
year={2018},
}

@inproceedings{nichol2021improved,
  title={Improved denoising diffusion probabilistic models},
  author={Nichol, Alexander Quinn and Dhariwal, Prafulla},
  booktitle={International conference on machine learning},
  pages={8162--8171},
  year={2021},
  organization={PMLR}
}

@inproceedings{ronneberger2015u,
  title={{U-Net}: Convolutional networks for biomedical image segmentation},
  author={Ronneberger, Olaf and Fischer, Philipp and Brox, Thomas},
  booktitle={International Conference on Medical image computing and computer-assisted intervention},
  pages={234--241},
  year={2015},
  organization={Springer}
}

@book{billingsley2017probability,
  title={Probability and measure},
  author={Billingsley, Patrick},
  year={2017},
  publisher={John Wiley \& Sons}
}

@inproceedings{chong2020effectively,
  title={Effectively unbiased {FID} and inception score and where to find them},
  author={Chong, Min Jin and Forsyth, David},
  booktitle={Proceedings of the IEEE/CVF conference on computer vision and pattern recognition},
  pages={6070--6079},
  year={2020}
}

@article{oquab2024dinov2,
title={{DINO}v2: Learning Robust Visual Features without Supervision},
author={Maxime Oquab and Timoth{\'e}e Darcet and Th{\'e}o Moutakanni and Huy V. Vo and Marc Szafraniec and Vasil Khalidov and Pierre Fernandez and Daniel Haziza and Francisco Massa and Alaaeldin El-Nouby and Mido Assran and Nicolas Ballas and Wojciech Galuba and Russell Howes and Po-Yao Huang and Shang-Wen Li and Ishan Misra and Michael Rabbat and Vasu Sharma and Gabriel Synnaeve and Hu Xu and Herve Jegou and Julien Mairal and Patrick Labatut and Armand Joulin and Piotr Bojanowski},
journal={Transactions on Machine Learning Research},
issn={2835-8856},
year={2024},
}

@inproceedings{jayasumana2024rethinking,
  title={Rethinking {FID}: Towards a better evaluation metric for image generation},
  author={Jayasumana, Sadeep and Ramalingam, Srikumar and Veit, Andreas and Glasner, Daniel and Chakrabarti, Ayan and Kumar, Sanjiv},
  booktitle={Proceedings of the IEEE/CVF Conference on Computer Vision and Pattern Recognition},
  pages={9307--9315},
  year={2024}
}

@book{cover1999elements,
  title={Elements of information theory},
  author={Cover, Thomas M},
  year={1999},
  publisher={John Wiley \& Sons}
}

@article{bradbury2018jax,
  title={{JAX}: composable transformations of {Python}+ {NumPy} programs},
  author={Bradbury, James and Frostig, Roy and Hawkins, Peter and Johnson, Matthew James and Leary, Chris and Maclaurin, Dougal and Necula, George and Paszke, Adam and VanderPlas, Jake and Wanderman-Milne, Skye and others},
  year={2018}
}

@inproceedings{stein2023exposing,
 author = {Stein, George and Cresswell, Jesse and Hosseinzadeh, Rasa and Sui, Yi and Ross, Brendan and Villecroze, Valentin and Liu, Zhaoyan and Caterini, Anthony L and Taylor, Eric and Loaiza-Ganem, Gabriel},
 booktitle = {Advances in Neural Information Processing Systems},
 editor = {A. Oh and T. Naumann and A. Globerson and K. Saenko and M. Hardt and S. Levine},
 pages = {3732--3784},
 publisher = {Curran Associates, Inc.},
 title = {Exposing flaws of generative model evaluation metrics and their unfair treatment of diffusion models},
 volume = {36},
 year = {2023}
}

@article{gretton2012kernel,
  title={A kernel two-sample test},
  author={Gretton, Arthur and Borgwardt, Karsten M and Rasch, Malte J and Sch{\"o}lkopf, Bernhard and Smola, Alexander},
  journal={Journal of Machine Learning Research},
  volume={13},
  number={25},
  pages={723--773},
  year={2012},
  publisher={JMLR. org}
}

@inproceedings{szegedy2016rethinking,
  title={Rethinking the Inception Architecture for Computer Vision},
  author={Szegedy, Christian and Vanhoucke, Vincent and Ioffe, Sergey and Shlens, Jon and Wojna, Zbigniew},
  booktitle={2016 IEEE Conference on Computer Vision and Pattern Recognition (CVPR)},
  pages={2818--2826},
  year={2016},
  organization={IEEE}
}

@book{villani2008optimal,
  title={Optimal transport: old and new},
  author={Villani, C{\'e}dric},
  volume={338},
  year={2008},
  publisher={Springer}
}

@article{peyre2019computational,
  title={Computational optimal transport: With applications to data science},
  author={Peyr{\'e}, Gabriel and Cuturi, Marco},
  journal={Foundations and Trends{\textregistered} in Machine Learning},
  volume={11},
  number={5-6},
  pages={355--607},
  year={2019},
  publisher={Now Publishers, Inc.}
}

@inproceedings{rabin2011wasserstein,
  title={Wasserstein barycenter and its application to texture mixing},
  author={Rabin, Julien and Peyr{\'e}, Gabriel and Delon, Julie and Bernot, Marc},
  booktitle={International conference on scale space and variational methods in computer vision},
  pages={435--446},
  year={2011},
  organization={Springer}
}

@phdthesis{nadjahi:tel-03533097,
  TITLE = {{Sliced-Wasserstein distance for large-scale machine learning : theory, methodology and extensions}},
  AUTHOR = {Nadjahi, Kimia},
  URL = {https://theses.hal.science/tel-03533097},
  NUMBER = {2021IPPAT050},
  SCHOOL = {{Institut Polytechnique de Paris}},
  YEAR = {2021},
  MONTH = Nov,
  TYPE = {Theses},
  HAL_ID = {tel-03533097},
  HAL_VERSION = {v1},
}

@INPROCEEDINGS {deng2009imagenet,
author = { Deng, Jia and Dong, Wei and Socher, Richard and Li, Li-Jia and Kai Li and Li Fei-Fei },
booktitle = { 2009 IEEE Computer Society Conference on Computer Vision and Pattern Recognition Workshops (CVPR Workshops) },
title = {{ ImageNet: A large-scale hierarchical image database }},
year = {2009},
volume = {},
ISSN = {1063-6919},
pages = {248-255},
publisher = {IEEE Computer Society},
address = {Los Alamitos, CA, USA},
month =Jun}

@inproceedings{liu2015faceattributes,
  author={Liu, Ziwei and Luo, Ping and Wang, Xiaogang and Tang, Xiaoou},
  booktitle={2015 IEEE International Conference on Computer Vision (ICCV)}, 
  title={Deep Learning Face Attributes in the Wild}, 
  year={2015},
  volume={},
  number={},
  pages={3730-3738}
}

@inproceedings{paszke2019pytorch,
 author = {Paszke, Adam and Gross, Sam and Massa, Francisco and Lerer, Adam and Bradbury, James and Chanan, Gregory and Killeen, Trevor and Lin, Zeming and Gimelshein, Natalia and Antiga, Luca and Desmaison, Alban and Kopf, Andreas and Yang, Edward and DeVito, Zachary and Raison, Martin and Tejani, Alykhan and Chilamkurthy, Sasank and Steiner, Benoit and Fang, Lu and Bai, Junjie and Chintala, Soumith},
 booktitle = {Advances in Neural Information Processing Systems},
 editor = {H. Wallach and H. Larochelle and A. Beygelzimer and F. d\textquotesingle Alch\'{e}-Buc and E. Fox and R. Garnett},
 pages = {},
 publisher = {Curran Associates, Inc.},
 title = {{PyTorch}: An Imperative Style, High-Performance Deep Learning Library},
 volume = {32},
 year = {2019}
}

@inproceedings{nadjahi2020statistical,
  author = {Nadjahi, Kimia and Durmus, Alain and Chizat, L\'{e}na\"{\i}c and Kolouri, Soheil and Shahrampour, Shahin and Simsekli, Umut},
  booktitle = {Advances in Neural Information Processing Systems},
  editor = {H. Larochelle and M. Ranzato and R. Hadsell and M.F. Balcan and H. Lin},
  pages = {20802--20812},
  publisher = {Curran Associates, Inc.},
  title = {Statistical and Topological Properties of Sliced Probability Divergences},
  volume = {33},
  year = {2020}
}

@inproceedings{heusel2017gans,
 author = {Heusel, Martin and Ramsauer, Hubert and Unterthiner, Thomas and Nessler, Bernhard and Hochreiter, Sepp},
 booktitle = {Advances in Neural Information Processing Systems},
 editor = {I. Guyon and U. Von Luxburg and S. Bengio and H. Wallach and R. Fergus and S. Vishwanathan and R. Garnett},
 pages = {},
 publisher = {Curran Associates, Inc.},
 title = {{GAN}s Trained by a Two Time-Scale Update Rule Converge to a Local Nash Equilibrium},
 volume = {30},
 year = {2017}
}

@InProceedings{pmlr-v84-genevay18a,
  title = 	 {Learning Generative Models with {Sinkhorn} Divergences},
  author = 	 {Genevay, Aude and Peyre, Gabriel and Cuturi, Marco},
  booktitle = 	 {Proceedings of the Twenty-First International Conference on Artificial Intelligence and Statistics},
  pages = 	 {1608--1617},
  year = 	 {2018},
  editor = 	 {Storkey, Amos and Perez-Cruz, Fernando},
  volume = 	 {84},
  series = 	 {Proceedings of Machine Learning Research},
  month = 	 {09--11 Apr},
  publisher =    {PMLR},
}

@inproceedings{cuturi2013sinkhorn,
 author = {Cuturi, Marco},
 booktitle = {Advances in Neural Information Processing Systems},
 editor = {C.J. Burges and L. Bottou and M. Welling and Z. Ghahramani and K. Weinberger},
 pages = {},
 publisher = {Curran Associates, Inc.},
 title = {Sinkhorn Distances: Lightspeed Computation of Optimal Transport},
 volume = {26},
 year = {2013}
}

@inproceedings{diffusion-ddpm,
 author = {Ho, Jonathan and Jain, Ajay and Abbeel, Pieter},
 booktitle = {Advances in Neural Information Processing Systems},
 editor = {H. Larochelle and M. Ranzato and R. Hadsell and M.F. Balcan and H. Lin},
 pages = {6840--6851},
 publisher = {Curran Associates, Inc.},
 title = {Denoising Diffusion Probabilistic Models},
 volume = {33},
 year = {2020}
}

@inproceedings{inception-score,
 author = {Salimans, Tim and Goodfellow, Ian and Zaremba, Wojciech and Cheung, Vicki and Radford, Alec and Chen, Xi},
 booktitle = {Advances in Neural Information Processing Systems},
 editor = {D. Lee and M. Sugiyama and U. Luxburg and I. Guyon and R. Garnett},
 pages = {},
 publisher = {Curran Associates, Inc.},
 title = {Improved Techniques for Training {GAN}s},
 url = {https://proceedings.neurips.cc/paper_files/paper/2016/file/8a3363abe792db2d8761d6403605aeb7-Paper.pdf},
 volume = {29},
 year = {2016}
}

@inproceedings{fid,
  title     = {{GAN}s Trained by a Two Time-Scale Update Rule Converge to a Local Nash Equilibrium},
  author    = {Heusel, Martin and Ramsauer, Hubert and Unterthiner, Thomas and Nessler, Bernhard and Hochreiter, Sepp},
  booktitle = {Advances in Neural Information Processing Systems (NIPS)},
  year      = {2017}
}

@inproceedings{fid-critique-pr,
 author = {Sajjadi, Mehdi S. M. and Bachem, Olivier and Lucic, Mario and Bousquet, Olivier and Gelly, Sylvain},
 booktitle = {Advances in Neural Information Processing Systems},
 editor = {S. Bengio and H. Wallach and H. Larochelle and K. Grauman and N. Cesa-Bianchi and R. Garnett},
 pages = {},
 publisher = {Curran Associates, Inc.},
 title = {Assessing Generative Models via Precision and Recall},
 volume = {31},
 year = {2018}
}

@article{monge-1781,
  title   = {M{\'e}moire sur la th{\'e}orie des d{\'e}blais et des remblais},
  author  = {Monge, Gaspard},
  journal = {Histoire de l'Acad{\'e}mie Royale des Sciences},
  year    = {1781},
  pages   = {666--704}
}

@article{kantorovich-1942,
  title   = {On the translocation of masses},
  author  = {Kantorovich, Leonid V.},
  journal = {Doklady Akademii Nauk SSSR},
  volume  = {37},
  number  = {7-8},
  pages   = {227--229},
  year    = {1942}
}

\newpage

\appendix

\section{Definitions}
\label{app:defs}
\subsection{Empirical measures}
\begin{definition}
For a sample $Y_1, \ldots, Y_n$ of size $n$ from some data distribution $\pdata$, we denote by $\hat p_{n, \text{data}}$ the {\em empirical distribution} of the $Y_i$s, defined by
\[
    \hat p_{n, \text{data}} = \frac{1}{n}\sum_{j} \delta_{Y_j}\, .
\]
Similarly, for $X_1, \ldots, X_n$ from $p_\theta$ we denote by $\hat p_{n, \theta}$ the empirical distribution of the $X_i$s
\[
    \hat p_{n, \theta} = \frac{1}{n}\sum_{j} \delta_{X_j}\, .
\]
\end{definition}

\subsection{Optimal transport}
\label{app:ot}

\begin{definition}
The optimal transport problem for Euclidean cost, also called the 2-Wasserstein distance is defined for two probability distributions $p, q \in \cP_2(\R^d)$ with finite second moments as
\begin{align}\label{eq:ot-samples}
    W_2^2(p, q) &= \min_{T: T_\# p = q}\E_{X \sim p}[\|X - T(X)\|^2]\\
    &= \min_{\pi \in \Pi(p,q)} \E_\pi[\|X-Y\|^2]\, ,
\end{align}
The first definition is defined as the {\em Monge} formulation \citep{monge-1781}, and the second one as the {\em Kantorovitch} formulation \citep{kantorovich-1942}, with the equivalence holding when $p, q$ are absolutely continuous, or discrete uniform samples of the same finite size.
\end{definition}
Note that this distance can be approximated with sample access to $p$ and $q$ by plugging directly these samples empirical measures $\hat p_n$ of the $X_j$ and $\hat q_n$ of the $Y_j$. However, this approach suffers from two issues: It suffers from a curse of dimensionality, and the convergence of the estimate $W_2^2(\hat p_n, \hat q_n)$ to $W_2^2(p, q)$ is slow, in $n^{-1/d}$ \citep{Dudley1969TheSO}, it is slow to compute in general, with a worst case super cubic cost of $n^3$ for the Hungarian algorithm, and methods based on the Sinkhorn algorithm in $n^2 \log(n)$. The latter is motivated by an entropic-regularized formulation \citep{cuturi2013sinkhorn} 
\[
W^2_{2, \varepsilon}= \min_{\pi \in \Pi(p,q)} \E_\pi[\|X-Y\|^2] - \varepsilon H(\pi)\, .
\]

An interesting exception is the one-dimensional case: when $d=1$, the solution of \eqref{eq:ot-samples} is given by
\[
W_2^2(\hat p_n, \hat q_n) = \frac{1}{n} \sum_{j=1}^n |\text{sort}(x)_j - \text{sort}(y)_j|^2\,
\]
where $\text{sort}: \R^n \to \R^n$ is the function that maps a vector $x\in \R^n$ to its copy sorted in nondecreasing order $\text{sort}(x) = (x_{\sigma(1)},\ldots,x_{\sigma(n)})^\top$ such that $x_{\sigma(1)} \le \ldots \le x_{\sigma(n)}$ (ties do not make this function ambiguous). It can therefore be computed in time of order $n \log n$. This can be leveraged for $d>1$ by considering the average Wasserstein distance over uniformly random unit directions. This is called the {\em sliced Wasserstein distance}

\begin{definition}
The sliced Wasserstein distance \citep{rabin2011wasserstein, nadjahi:tel-03533097} is defined as the average of the Wasserstein distances over 1-d projections along $u \sim \cU(S)$ a uniformly random unit direction
\[
SW_2^2(p, q) = \E_{u \sim \cU(S)}[W_2^2(u^\top p, u^\top q)]\,
\]
where $u^\top p$ (resp. $u^\top q$) denotes the distribution of $u^\top X$ when $X \sim p$ (resp. $u^\top Y$ when $Y \sim q$).

\end{definition}

The sliced Wasserstein distance is still a distance between distributions. It can also be easily estimated from samples, given empirical measures $\hat p_n$ and $
\hat q_n$ and $M$ i.i.d. unit vectors $u_1, \ldots u_M$
\begin{align*}
\hat{SW}^2_{2, M}(\hat p_n, \hat q_n) &= \frac{1}{M} \sum_{i=1}^M W_2^2(u_i^\top \hat p_n, u_i^\top \hat q_n) \\
&= \frac{1}{Mn}\sum_{i=1}^M \sum_{j=1}^n |\text{sort}(u_i^\top X)_j -  \text{sort}(u_i^\top Y)_j|^2 \, ,
\end{align*}
One of the advantages of this approach is the relaxed computational load: computing Wasserstein distances in 1D only requires to sort all the elements in the sample, which can be done in order of $n \log n$ time, and is highly parallelizable, allowing to perform $M$ of these operations with little to no overhead, and $M$ projections from dimension $d$ to 1, for $n$ points each time.

In particular, under mild assumptions, the sample complexity of estimating the sliced Wasserstein distance does not depend on the dimension of the problem \citep{nadjahi2020statistical}, in contrast to the standard Wasserstein distance for which the sample complexity grows exponentially with the dimension. 

We finally note that as in \cite{rabin2011wasserstein}, we consider the average of the {\em squared} 1-D Wasserstein distances, and would do so for other $\ell_p$, $p\neq 2$ norm costs.

\subsection{Metric comparison}
\paragraph{Remarks about FID}
\begin{itemize}[topsep=0pt,itemsep=2pt,parsep=2pt,leftmargin=10pt]
    \item[-] The last formula is also found in the literature as the following, both are equal
\begin{align*}
 &W_2^2(\cN(\mu_X, \Sigma_X), \cN(\mu_Y, \Sigma_Y)) = \|\mu_X - \mu_Y\|^2 \\
 & \quad + \text{tr}(\Sigma_X + \Sigma_Y - 2(\Sigma_Y^{1/2}\Sigma_X\Sigma_Y^{1/2})^{1/2})\, .
\end{align*}

    \item[-] In practice, the expectations are obtained based on a finite sample $X_1, \ldots, X_n$ from a generative model and $Y_1, \ldots, Y_n$ from a dataset, and we actually compute the plug-in estimate 
    \begin{align*}
    \FID(\hat p_{n, \theta}, \hat p_{n, \text{data}}) &= \|\hat \mu_X - \hat \mu_Y\|^2 \\
    & \quad + \text{tr}(\hat \Sigma_X + \hat \Sigma_Y - 2(\hat \Sigma_Y \hat \Sigma_X)^{1/2})\, .
    \end{align*}
    
    \item [-] This distance is motivated by the {\em Wasserstein distance} $W_2^2$ \citep[see, e.g.][and Appendix~\ref{app:ot} in this work]{villani2008optimal}, obtained by solving an optimal transport problem, with a square Euclidean distance cost. For FID, this distance is applied to two fitted Gaussian distributions rather than to the sample distributions $\hat p_{n, \theta}$ and $\hat p_{n, \text{data}}$.\\

    \item[-] Using this method, rather than a sample-based estimate of  $W_2^2(\hat p_{n, \theta}, \hat p_{n, \text{data}})$, allows to overcome the two main obstacles when using the Wasserstein distance between two distributions based on sample access: {\em statistical} and {\em computational complexity}. Computing the FID only requires to estimate the mean and covariance matrices, and to perform a conceptually simple, closed-form computation. 
\end{itemize}

\label{app:metrics}
\begin{definition}[mean FID]
    Let $X = \psi_w(g_\theta(Z)) \sim p_\theta$ and $Y = \psi_w(D) \sim \pdata$, 
\[
\mu_X = \E_{p_\theta}[X] \,, \quad \mu_Y = \E_{\pdata}[Y] \, .
\]

The mean FID (that we denote by $\mu \FID$) is defined as 
\[
\mu\FID(p_\theta, \pdata) = \|\mu_X - \mu_Y\|^2\, .
\]
\end{definition}
\paragraph{Remarks}
\begin{itemize}[topsep=0pt,itemsep=2pt,parsep=2pt,leftmargin=10pt]

\item[-] Much like for FID, it is also very easy to estimate the mean FID from a finite sample with the plug-in empirical measures $\mu\FID(\hat p_{n, \theta}, \hat p_{n, \text{data}}) = \|\hat\mu_X - \hat\mu_Y\|^2$.\\

\item[-] We show in Section~\ref{sec:exp-sample} that the sample complexity of $\mu \FID$ is much lower than that of $\FID$ - this probably stems from the fact that only a vector of size $d$ must be evaluated rather than a $d$-by-$d$ matrix.\\

\item[-] We show in Section~\ref{sec:exp-hacking} that it  is even less robust than $\FID$.
\end{itemize}

\begin{definition}[Sliced FID]
Let $X = \psi_w(g_\theta(Z)) \sim p_\theta$ and \mbox{$Y = \psi_w(D) \sim \pdata$}, the sliced FID (that we denote by $\sigma \FID$) is defined as
\[
\sigma \FID(p_\theta, \pdata) = \E_{u\sim\cU(S)} [\FID(u^\top p_\theta, u^\top \pdata)]\, .
\]
For finite samples $(X_j)_{j\in[n]}$, $(Y_j)_{j\in[n]}$ and $(u_i)_{i \in [M]}$ it can be estimated by
\begin{align*}
\sigma \FID(\hat p_{n, \theta}, \hat p_{n, \text{data}}) &= \frac{1}{M} \sum_{i=1}^M \FID(u_i^\top \hat p_n, u_i^\top \hat q_n) \\
&= \frac{1}{M}\sum_{i=1}^M\big\{(u_i^\top \hat \mu_{n, X} - u_i^\top \hat \mu_{n, Y})^2 \\
&\quad + (\hat \sigma_{n, u_i^\top X} - \hat \sigma_{n, u_i^\top Y})^2\big\} \, .
\end{align*}
\end{definition}
\paragraph{Remarks}
\begin{itemize}[topsep=0pt,itemsep=2pt,parsep=2pt,leftmargin=10pt]

    \item[-] While very easy to estimate from samples, we also show that it suffers from the same robustness issues as the FID.
\end{itemize}

\begin{definition}[Sinkhorn divergence \citep{pmlr-v84-genevay18a}] For two distributions, we denote by $W_{\varepsilon}(p, q)$ the value of the entropic-regularized optimal transport problem between $p$ and $q$ \citep[see, e.g.][and Appendix~\ref{app:ot} in this work]{peyre2019computational}. Let $X = \psi_w(g_\theta(Z)) \sim p_\theta$ and \mbox{$Y = \psi_w(D) \sim \pdata$}, the Sinkhorn Divergence Inception Distance (SDID) is defined by
\begin{align*}
\text{SDID}_\varepsilon(p_\theta, \pdata) = W_{\varepsilon}(p_\theta, \pdata) - \frac{1}{2}W_{\varepsilon}(p_\theta, p_\theta) - \frac{1}{2}W_{\varepsilon}(\pdata, \pdata)\, .
\end{align*}
\end{definition}
\paragraph{Remarks}
\begin{itemize}[topsep=0pt,itemsep=2pt,parsep=2pt,leftmargin=10pt]

    \item[-] For finite samples, the empirical measures $\hat p_{n, \theta}, \hat p_{n, \text{data}}$ can be split in $\hat p_{1,n, \theta}, \hat p_{2,n, \theta}, \hat p_{1, n, \text{data}}, \hat p_{2, n, \text{data}}$ and the divergence can be estimated by
    \begin{align*}
    \text{SDID}_\varepsilon(\hat p_{n, \theta}, \hat p_{n, \text{data}}) &= W_{\varepsilon}(\hat p_{1,n, \theta}, \hat p_{1, n, \text{data}}) \\
    &\quad - \frac{1}{2}W_{\varepsilon}(\hat p_{1,n, \theta}, \hat p_{2,n, \theta}) \\
    &\quad - \frac{1}{2}W_{\varepsilon}(\hat p_{1, n, \text{data}}, \hat p_{2, n, \text{data}})\, .
    \end{align*}
SDID can be computed by computing each entropic regularized optimal transport problem with a fast GPU-friendly alternate projection method, called Sinkhorn's algorithm \citep{cuturi2013sinkhorn}.\\

\item[-] In practice, to overcome a curse of dimensionality, we have found it better to estimate the correction term from two independent samples $\hat p_{1,n}, \hat p_{2,n}$. This concretely doubles the required sample size. We have found this metric to be much more robust than FID, and to require a smaller sample size, but of a similar order (ignoring this doubling) - see Section~\ref{sec:exp-sample}. 
\end{itemize}

\begin{definition}[Maximum mean discrepancy - MMD \citep{gretton2012kernel, jayasumana2024rethinking}]
    Let $X = \psi_w(g_\theta(Z)) \sim p_\theta$ and $Y = \psi_w(D) \sim \pdata$, and the kernel function $k_\sigma(x,y) = \exp(-\|x-y\|^2/\sigma)$, for $\sigma>0$, the MMD is defined as
\begin{align*}
\text{MMD}(p_\theta, \pdata) &= \E_{p_\theta\otimes p_\theta}[k(x, x')]- 2 \E_{p_\theta\otimes \pdata}[k(x, y)] \\
&\quad + \E_{\pdata \otimes \pdata}[k(y, y')]\, .
\end{align*}
\end{definition}
\paragraph{Remarks}
\begin{itemize}[topsep=0pt,itemsep=2pt,parsep=2pt,leftmargin=10pt]

    \item[-] Since it is defined as a two-sample mean, the MMD can also be estimated quickly from empirical distributions $\hat p_{n, \theta}$ and $\hat p_{n, \text{data}}$.\\

    \item[-] One of the drawbacks of this metric is the need to select a hyperparameter $\sigma>0$.\\

    \item[-] Another drawback is the computational aspect, as an $n \times n$ kernel matrix must be computed. \\

    \item[-] The Sinkhorn divergence and MMD are related: when $\varepsilon \to +\infty$, we have that 
    \[
    \text{SDID}_\varepsilon \to \frac{1}{2} \text{MMD}_{-\|\cdot\|^2}\,
    \]
     where the kernel function $k$ is given by the negative squared Euclidean distance (rather than a Gaussian kernel).\\
     
     \item[-] The use of this metric, with another embedding network, is recommended in \citep{jayasumana2024rethinking}.
\end{itemize}

\section{Proofs}
\label{app:proofs}

\subsection{Proof of Proposition 4.1}
\label{app:proof}
\begin{proof}
Recall that the target distribution $p$ has mean $\mu$ and covariance $\Sigma$. We construct the discrete distribution $\hat{q}$ using $2r$ vectors $v_i^{(+)} = \mu + \alpha u_i$ and $v_i^{(-)} = \mu - \alpha u_i$, where each vector is assigned probability $\pi_i = \lambda_i / (2\tr(\Sigma))$, and $\alpha = \sqrt{\tr(\Sigma)}$.

\paragraph{1. Mean of $\hat{q}$}
By the definition of expectation for a discrete distribution:
\begin{align}
\E_{\hat{q}}[v] &= \sum_{i=1}^r \pi_i v_i^{(+)} + \sum_{i=1}^r \pi_i v_i^{(-)} \\
&= \sum_{i=1}^r \pi_i (\mu + \alpha u_i) + \sum_{i=1}^r \pi_i (\mu - \alpha u_i) \\
&= \sum_{i=1}^r \pi_i (2\mu) = \mu \sum_{i=1}^r 2\pi_i.
\end{align}
Since $2\pi_i = \lambda_i / \tr(\Sigma)$ and $\sum_{i=1}^r \lambda_i = \tr(\Sigma)$, it follows that $\sum 2\pi_i = 1$, hence $\E_{\hat{q}}[v] = \mu$.

\paragraph{2. Covariance of $\hat{q}$}
The covariance of $\hat{q}$ is given by $\E_{\hat{q}}[(v - \mu)(v - \mu)^\top]$:
\begin{align}
\text{Cov}_{\hat{q}}(v) &= \sum_{i=1}^r \pi_i (v_i^{(+)} - \mu)(v_i^{(+)} - \mu)^\top + \sum_{i=1}^r \pi_i (v_i^{(-)} - \mu)(v_i^{(-)} - \mu)^\top \\
&= \sum_{i=1}^r \pi_i (\alpha u_i)(\alpha u_i)^\top + \sum_{i=1}^r \pi_i (-\alpha u_i)(-\alpha u_i)^\top \\
&= \sum_{i=1}^r 2\pi_i \alpha^2 u_i u_i^\top.
\end{align}
Substituting $\alpha^2 = \tr(\Sigma)$ and $2\pi_i = \lambda_i / \tr(\Sigma)$:
\begin{align}
\text{Cov}_{\hat{q}}(v) &= \sum_{i=1}^r \left( \frac{\lambda_i}{\tr(\Sigma)} \right) \tr(\Sigma) u_i u_i^\top \\
&= \sum_{i=1}^r \lambda_i u_i u_i^\top = \Sigma.
\end{align}

\paragraph{3. FID Value}
The $\FID$ between two distributions is defined as the 2-Wasserstein distance between their associated Gaussians \citep{heusel2017gans}. Consequently, $\FID$ is strictly a function of the first two moments. Since $\E_{\hat{q}}[v] = \mu_p$ and $\text{Cov}_{\hat{q}}(v) = \Sigma_p$, the means and covariances match exactly:
\begin{align}
\FID(\hat{q}, p) &= \|\mu - \mu\|^2 + \tr(\Sigma + \Sigma - 2(\Sigma \Sigma)^{1/2}) \\
&= 0 + \tr(2\Sigma - 2\Sigma) = 0.
\end{align}
This concludes the proof.
\end{proof}

\newpage

\section{Additional results}
\label{app:results}
\begin{figure}
\hfill
\begin{subfigure}{.44\textwidth}
\centering
\includegraphics[width=\linewidth]{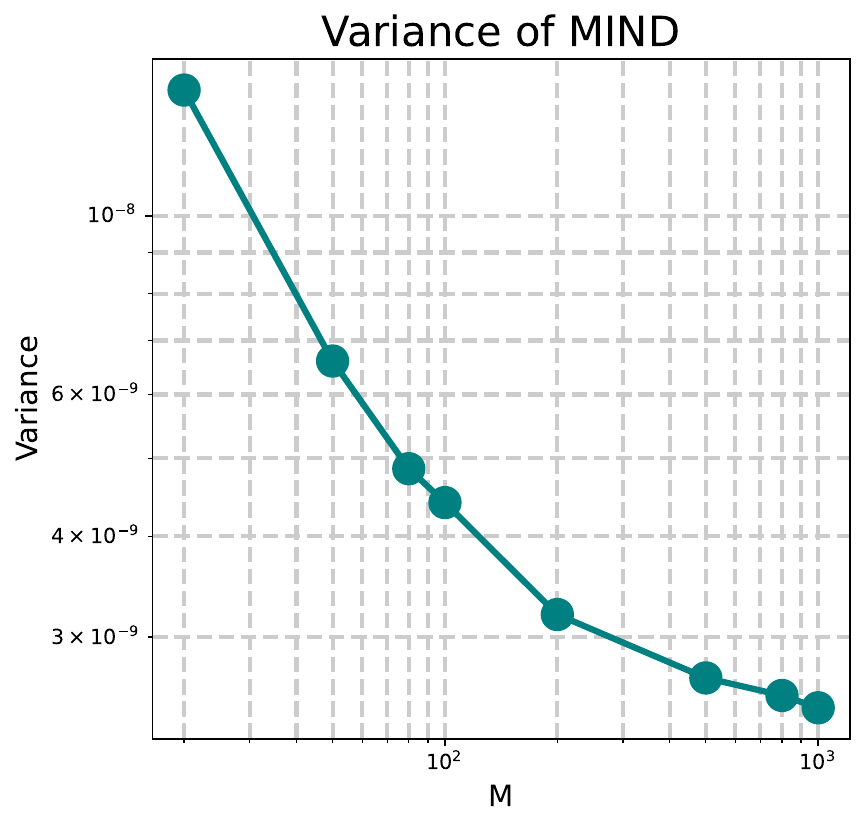}
\subcaption{Variance of the $\MIND$ with different number of projections $M$.}
\label{fig:variance}
\end{subfigure}
\hfill
\begin{subfigure}{.48\textwidth}
\centering
\includegraphics[width=\linewidth]{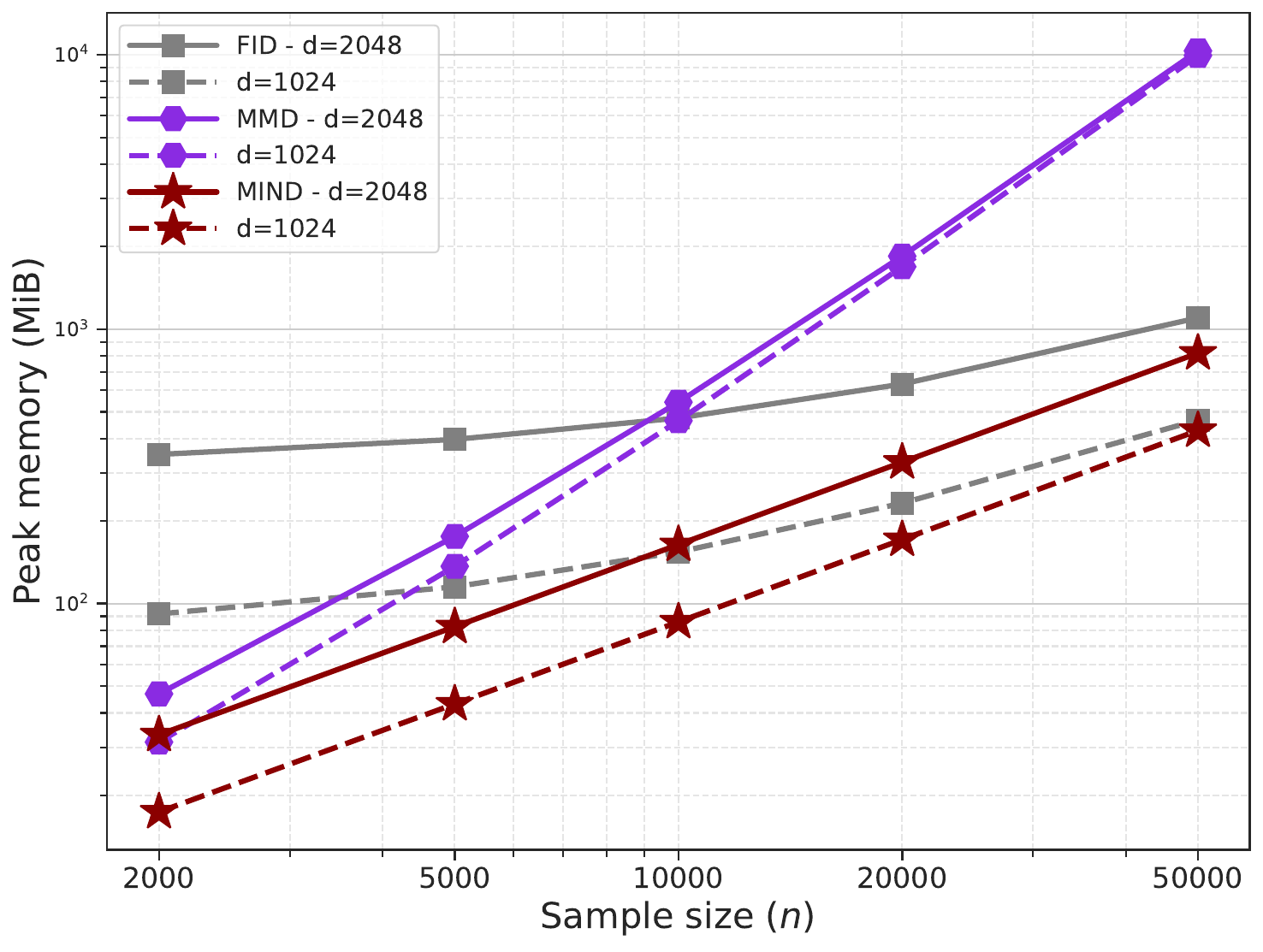}
\subcaption{Peak memory used for calculating different metrics.}
\label{fig:peak-memory}
\end{subfigure}
\hfill\hfill
\end{figure}
\subsection{Effects of number of projections}
\label{app:num-projections}
As shown in Figure~\ref{fig:variance}, our empirical analysis shows that the variance is not affected at smaller scales than numerical artifacts for $M>1000$. We also show that using $M=100$ yields almost the same performance, while there is a substantial degradation when using $M=10$.
\subsection{Peak Memory}
\label{app:peak-memory}
The measurements include both memory occupied by the input data and the temporary memory required for metric computation. As shown in Figure~\ref{fig:peak-memory}, $\MIND$ at its recommended sample size ($n=5$k) requires less memory than MMD and FID.

\subsection{Computation resources and experimental details}
\label{app:resource}
We run each experiment in Section \ref{sec:exp-sample} for $2$ hours using $4$ TPUv$5$e and each experiment in Section \ref{sec:exp-hacking} for $10$ minutes using $4$ TPUv$5$e. The diffusion model used in Section \ref{sec:exp-sample} is trained with $5$M steps on ImageNet-64, we summarize other details for its training and sampling in Table~\ref{tab:hyperparams}.
\begin{table}[ht]
    \centering
    \begin{tabular}{lc}
    \toprule
    {\bf Name} & {\bf Value} \\
    \midrule
    Condition embedding dimension & 512\\
    Noise embedding dimension & 512\\
    Optimizer & Adam with standard hyperparameters \\
    Learning Rate & $10^{-5}$ \\
    EMA decay & $0.9999$ \\
    Hardware & 16 TPUv6e \\
    \midrule
    Noise schedule & Rectified Flow \\
    Number of sampling steps & $250$ \\
    CFG weight & 0.0 \\
    \midrule
    Number of base channels & 192 \\
    Attention Head Dimension & 64 \\
    Number of downsampling & 4 \\
    Channels multiplier & (1, 2, 3, 4) \\
    Residual blocks per level & (3, 3, 3, 3) \\
    \bottomrule
    \end{tabular}
    \vspace{.5em}
    \cprotect\caption{Hyperparameters for training and sampling from diffusion models.}
    \label{tab:hyperparams}
\end{table}
\subsection{Moment-matching hacking}
\label{app:hack}
We illustrate the results of the experiment described in Section~\ref{sec:exp-hacking} in Figure~\ref{fig:hack-vis}
\begin{figure}

\centering
\includegraphics[width=0.95\linewidth]{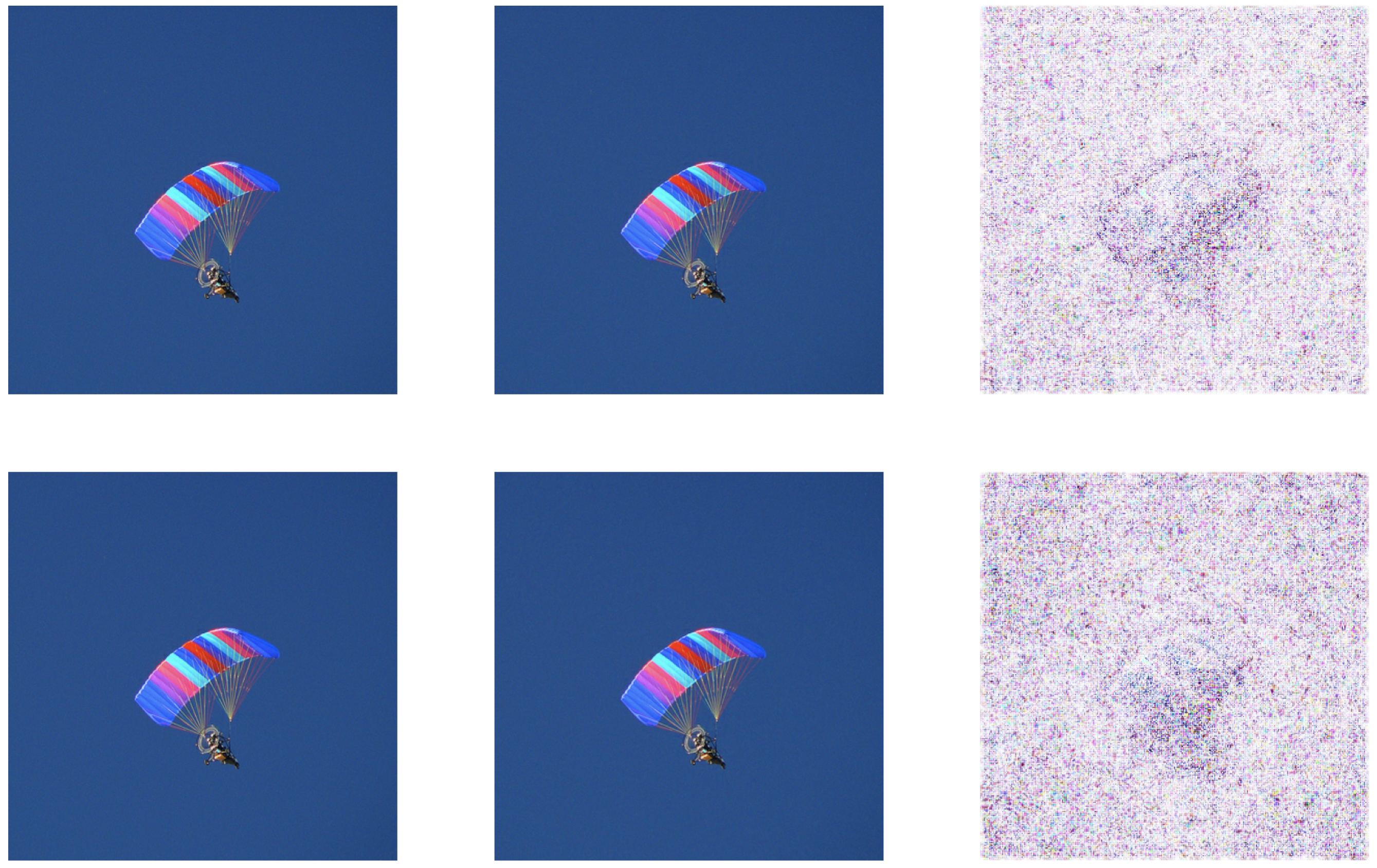}
\caption{Two elements of the batch, all initial images are the same. (\textbf{Left}) Initial image (\textbf{Center}) Image after optimization (\textbf{Right}) Difference scaled by a factor 100 (to become visible). \label{fig:hack-vis}}

\end{figure}

\end{document}